%% file: main.tex
\documentclass{article}
\usepackage{float}
\usepackage{iclr2027_conference,times}

\input{math_commands.tex}

\usepackage{wrapfig}
\usepackage{hyperref}
\usepackage{url}
\usepackage{tikz}
\usepackage{algorithm}
\usepackage{algorithmic}
\usepackage{siunitx}
\usepackage{subcaption}
\usepackage[table]{xcolor}
\usepackage{booktabs}
\usepackage{siunitx}
\usepackage{thmtools}
\usepackage{thm-restate}

\usepackage{enumitem}
\usepackage[symbol]{footmisc}

\title{\textbf{SMat-Attention}: Structured Long-Context\\ Sequence Modeling}

\iclrfinalcopy

\author{
Emile Anand* \\
Georgia Institute of Technology \\
\texttt{emile@gatech.edu}
\And
Abdullah Ateyeh* \\
University of California, Berkeley \\
\texttt{abdullah\_ateyeh@berkeley.edu}
\And
Archer Wang* \\
Massachusetts Institute of Technology \\
\texttt{archerw@mit.edu}
\And
Marin Soljačić \\
Massachusetts Institute of Technology \\
\texttt{soljacic@mit.edu}
}

\begin{document}

\maketitle
\lhead{}

 \begin{abstract}
 
Long-context sequence models face a fundamental tradeoff: softmax attention uses flexible token-level interactions at quadratic cost, whereas linear attention obtains linear-time training and constant-time decoding by compressing history into a fixed-size state. In this work, we ask whether we can connect these regimes through a tunable notion of structure. To this end, we introduce Structured Matrix Attention (SMat-Attention) via a family of causal masks with structured long-range routing whose row supports have VC-dimension $d$. In our construction, $d=1$ recovers the standard causal mask, and increasing $d$ permits richer subset-routing patterns. We give chunkwise forward and backward algorithms to enable hardware-efficiency. For sequences of length $T$, the hard-routing construction takes $O(T^{2-3/d}+T)$ work, despite the mask being dense, for our prescribed family. In fixed-horizon streaming, decoding after the distant prefix takes constant time per token using $O(T^{1-1/d})$ cached states. SMat-Attention therefore makes VC-dimension an explicit knob governing access-pattern complexity, prefill cost, and decoding memory. Empirically, subset-routing and rule-assisted multi-key retrieval experiments illustrate the masks’ routing expressiveness. Extensions to Mamba-2 and Gated DeltaNet using learned routing with top-$k$ query reads retain subquadratic prefill, improve recall accuracy over the backbones in several settings, and achieve comparable small-scale language-modeling performance.
 \end{abstract}

\section{Introduction}

Attention is a foundational building block of modern deep learning \citep{bahdanau2016neuralmachinetranslationjointly} and serves as the core mechanism for modeling token interactions in Transformer architectures \citep{vaswani2023attentionneed}. Given key, query, and value matrices $\mQ, \mK$, and $\mV$, softmax attention computes
\begin{equation}\mathrm{Attention}(\mathbf{Q},\mathbf{K},\mathbf{V}) = \mathrm{softmax}\left(\frac{\mathbf{Q} \mathbf{K}^\top}{\sqrt{d_k}}\right)\mathbf{V}.\end{equation}
This operation gives each query direct access to token-level information, but its prefill computation grows quadratically with sequence length and its decoding cache grows linearly \citep{vaswani2023attentionneed}. However, fundamentally, long-context sequence modeling requires retaining useful information and selecting which parts of the past should influence each query. Hardware-aware kernels improve execution efficiency \citep{dao2023fa,shah2024flashattention3,liu2024ringattention,kwon2023efficient}, while sparse methods such as Native Sparse Attention and MoBA reduce the interactions evaluated for each query \citep{yuan2025native,lu2025moba}. \looseness=-1

Recurrent alternatives such as linear attention address these costs by compressing the history into a fixed-size recurrent state \citep{katharopoulos2020transformersrnnsfastautoregressive}. Modern variants improve how the model maintains this state. Structured state-space models (SSMs) \citep{fu2023hungryhungryhipposlanguage,gu2022efficientlymodelinglongsequences} compress history with time-invariant recurrences; Mamba and Mamba-2 make this recurrence input-dependent through selective gating \citep{gu2023mamba,dao2024transformers}, while DeltaNet and Gated DeltaNet use structured transition matrices \citep{schlag2021linear,yang2024gated,yang2025gateddeltanetworksimproving} that update via the delta rule \citep{10.1162/neco.1992.4.1.131,65669.104390}. These mechanisms improve retention and retrieval; however, their fixed-size hidden state still constrains associative recall over long contexts \citep{arora2023zoologymeasuringimprovingrecall}.\looseness=-1

\noindent These advances highlight the role of structure in efficient sequence modeling. For instance, linear attention exploits its causal prefix structure to reuse accumulated key–value summaries, yielding $O(T)$ computation. Gated variants, in turn, extend this approach through semiseparable structure ~\citep{dao2024transformers}, while long-convolution models exploit Toeplitz structure to compute their outputs in $O(T\log T)$ time using FFT ~\citep{poli2023hyena, qin2023toeplitzneuralnetworksequence}. Log-Linear Attention \citep{guo2025loglinearattention} further expands this design space by changing the organization of memory: it organizes recurrent summaries through a Fenwick-tree hierarchy, achieving $O(T\log T)$ computation and $O(\log T)$ decoding memory. 
Other recent approaches route tokens among multiple recurrent states \citep{du2026mom} or let the compressed memory grow with context length \citep{behrouz2026memorycaching, goldstein2026kvm}, or as a latent vector \citep{anand2026continuouslatentcontextsenable}. These approaches motivate studying not only how much memory a model retains, but also which subsets of stored information each query can access. Therefore, we investigate the following question: \emph{can the combinatorial richness of long-range access patterns be made an explicit architectural parameter, with corresponding guarantees on computation and memory?}

We study structured long-range access as an intermediate regime and how its complexity governs computation and memory. To make this complexity explicit, we use the VC dimension of a causal mask’s row supports \citep{kearns1994introduction,vapnik1971uniform}. Each row specifies the keys available to a query; under this lens, the VC dimension measures the largest number of keys on which the rows realize every possible subset. Recent connections between VC dimension and matrix multiplication \citep{anand2025structuralcomplexitymatrixvectormultiplication} motivate constructing attention mechanisms that couple this combinatorial parameter to computational guarantees.\looseness=-1

We introduce Structured Matrix Attention ({SMat-Attention}), a family of causal attention masks built from point-hyperplane incidences over finite fields, whose row-support VC dimension \(d\) explicitly controls long-range access complexity. Our framework recovers ordinary causal masking when \(d=1\), and exploits additional structure to support richer access patterns with subquadratic attention. Building on structured masked-attention formulations \citep{choromanski2022block}, we establish the following results:
\begin{enumerate}[leftmargin=*, labelsep=0.5em]
    \item  We exploit the resulting incidence structure to derive chunkwise forward and backward algorithms. For sequences of length $T$, we show that this mechanism takes $O(T^{2-3/d}+T)$ work, yielding $O(T)$-attention for $d\leq 3$ and $O(T^{2-3/d})$-attention for $d>3$, despite the full causal mask having $\Theta(T^2)$ nonzero entries.
    \item  We show that after processing a fixed distant prefix, SMat-Attention supports \(T\)-independent per-token decoding using \(O(T^{1-1/d})\) cached states, revealing an explicit tradeoff between long-range access complexity and memory.
\item We extend the construction to Mamba-2 and Gated DeltaNet with learned content hashing and a learned four-read selector. The extension inherits the tabulation, cache and VC bounds of the above; its selector adds $\Theta(T^{2-1/d})$ prefill work and $O(T^{1-1/d})$ per decoded token. Controlled tasks show benefits consistent with increased routing expressiveness; learned SMat extensions improve mean recall accuracy over native backbones in several tested settings and remain competitive on small-scale PG-19 language modeling.
\end{enumerate} 
\section{Preliminaries}

Let $T$ be the length of the input sequence. Following \cite{vaswani2023attentionneed}, attention linearly projects the input tokens into $\mathbf Q\in\R^{T\times d_{QK}}$, $\mathbf K\in\R^{T\times d_{QK}}$ and $\mathbf V\in\R^{T\times d_v}$, the queries, keys and values. Following \cite{choromanski2022block}, the \emph{general masked kernel attention} is
\[
  \mathsf{Att}_K(\mQ,\mK,\mV,\mM)=\mD^{-1}\mA\mV,
  \qquad
  \mA=\mM\odot\mathcal K(\mQ,\mK),
  \qquad
  \mD=\mathsf{diag}(\mA\mathbf 1_T),
\]
where $\odot$ is the entrywise product, $\mathcal K:\R^{d_{QK}}\times\R^{d_{QK}}\to\R$
is a kernel, $\mathcal K(\mQ,\mK)_{ij}=K(\mathbf q_i,\mathbf k_j)$ for
$\mathbf q_i$ the $i$th row of $\mQ$ and $\mathbf k_j$ the $j$th row of $\mK$, and \(\mathbf 1_T\) is the all-ones vector.
Softmax attention is the special case $K(x,y)=\exp(\frac{x^\top y}{\sqrt{d_{QK}}})$
where $\mM=\exp(\mN)$ entrywise and $\mN$ is the logits mask.\looseness=-1

\textbf{Finite-feature attention.}
Assuming the kernel has a nonnegative feature factorization
of dimension $r$, $\mathcal K(q,k)=\phi_\mQ(q)^\top\phi_\mK(k)$, where $ \phi_\mQ(q),\phi_\mK(k)\in\R^r_{\ge0}$.
Let $\phi_i:=\phi_\mQ(q_i)$ and $\psi_j:=h_j\phi_\mK(k_j)$, where $h_j\ge0$ is an
optional key gate (set $h_j\equiv1$ for ungated attention). The normalizer is
carried along with the values by appending a constant coordinate: with
$p:=d_v+1$,
\begin{equation}
  \bar v_j:= \begin{bmatrix} v_j \\ 1\end{bmatrix} \in\R^p,
  \qquad
  \mZ_j:=\psi_j\bar v_j^\top\in\R^{r\times p}.
  \label{eq:augmented-kv-state}
\end{equation}
For any nonnegative mask $\mM$ the \emph{augmented output} and the attention
output are
\begin{equation}
  \bar y_i=\phi_i^\top\sum_{j=1}^T \mM_{ij} \mZ_j\in\R^p,
  \qquad
  o_i=\frac{\bar y_{i,1:d_v}}{\bar y_{i,p}} .
  \label{eq:augmented-attention-output}
\end{equation}
Through the augmentation, $\bar y_i$ is \emph{linear} in $\mM$: the
last coordinate accumulates the denominator along with the numerator, and the single nonlinearity is the final division. So if $\smash{\mM=\sum_\ell \mM^{[\ell]}}$, the
contributions $\smash{\bar y_i^{[\ell]}}$ can be computed independently, in different
orders and with different computational kernels, as long as they are summed before the
division. Section~\ref{subsec:chunkwise} does this with two summands. Importantly, we require the feature map to be finite and
nonnegative, so softmax attention is covered only through a kernel approximation such as \cite{choromanski2022rethinkingattentionperformers}.\looseness=-1

\textbf{VC dimension of a mask.}
A binary mask $\mM\in\{0,1\}^{T\times T}$ defines a set system on the key indices: row $i$ is the set $S_i(\mM)=\{j\in[T]:\mM_{ij}=1\}$ of keys visible to query $i$. If we let $\mathcal S(\mM)=\{S_1(\mM),\dots,S_T(\mM)\}$, then $\mathrm{VC}(\mM)$ is the VC dimension of $\mathcal S(\mM)$, i.e.\ the largest $k$ for which some set of $k$ keys is shattered by the rows. The causal mask $\mathbf L_T$ has $\mathrm{VC}=1$: its rows are the prefixes
$\{1,\dots,i\}$, which are totally ordered, so no two keys can be shattered, i.e. no query sees a later key without also seeing every earlier one. On the other hand, an unconstrained $T$-row mask can have VC-dimension as large as $\lfloor\log T\rfloor$. The parameter $d$ interpolates between these, and Theorems~\ref{thm:exact}
and~\ref{thm:decode} price the interpolation.

\section{SMat-Attention}

We construct a family of causal masks
$\mathbf M^{(1)},\mathbf M^{(2)},\dots$ indexed by their VC dimension. We show that attention under $\mathbf M^{(d)}$ can be computed in $O(T^{2-3/d} + T)$ work. Each mask is built from incidences between points and
hyperplanes over a finite field, which gives it a computationally favorable structure. For $d\le 3$ the forward pass is linear in the sequence length, and decoding runs from a cache of $O(T^{1-1/d})$ states.

\label{subsec: mask constructions}
\begin{wrapfigure}[18]{r}{0.40\textwidth}
\vspace{-17pt}
\centering
\begin{tikzpicture}[scale=0.9]
  \draw[thick] (0,0) rectangle (5,5);
  \draw[thick] (2.2,0) -- (2.2,5);
  \draw[thick] (0,2.8) -- (5,2.8);
  \fill[blue!12] (0,5) -- (2.2,2.8) -- (0,2.8) -- cycle;
  \draw[thick] (0,5) -- (2.2,2.8);
  \node at (0.65,3.35) {\small $\mathbf{L}_n$};
  \node at (3.6,3.9) {\small $\mathbf{0}$};
  \fill[orange!16] (0,0) rectangle (2.2,2.8);
  \node at (1.1,1.3) {\small $\mathbf G=\mathbf R\,\mathbf C\,\mathbf E^\top$};
  \fill[green!14] (2.2,2.8) -- (5,0) -- (2.2,0) -- cycle;
  \draw[thick] (2.2,2.8) -- (5,0);
  \node at (3.1,1) {\small $\mathbf{L}_m$};
  \draw[<->] (0,5.35) -- (2.2,5.35) node[midway,above] {\scriptsize distant $[n]$};
  \draw[<->] (2.2,5.35) -- (5,5.35) node[midway,above] {\scriptsize recent $(n,T]$};
  \node[rotate=90] at (-0.45,3.9) {\scriptsize distant queries};
  \node[rotate=90] at (-0.45,1.4) {\scriptsize recent queries};
\end{tikzpicture}
\caption{Block layout of $\mathbf{M}^{(d)}$; shaded regions are nonzero. Both
diagonal blocks are ordinary causal masks, and all long-range structure lives in $\mathbf{G}$.}
\label{fig:blocks}
\end{wrapfigure}

\textbf{Block form.}
Let $n$ be the number of \emph{distant} tokens, and $m=T-n$ be the remaining \emph{recent} tokens. Let $\mathbf L_s$ be the inclusive lower-triangular all-ones matrix of order $s$. Every mask in the family has the form
\begin{equation}
  \mathbf M^{(d)}
  =\begin{pmatrix}
     \mathbf L_n & \mathbf 0\\[2pt]
     \mathbf G   & \mathbf L_m
   \end{pmatrix},
  \qquad \mathbf G\in\{0,1\}^{m\times n},
  \label{eq:blockform}
\end{equation}
so distant and recent tokens are each causally masked, and $\mathbf G$ encodes all of the long-range interaction.

\textbf{The construction of $\mathbf{G}$.}
Let $d\geq 2$ and $q$ be a prime. The geometry is the ambient space of $(d-1)$-dimensional vectors over the finite field $\mathbb F_q$, given by $\mathbb F_q^{d-1}$, with its affine hyperplanes $H_{a,b}=\{x:a^\top x=b\}$, one per normalized direction $a\neq0$ and offset $b$, of which there are
\begin{equation}
  B=\frac{q(q^{d-1}-1)}{q-1}=\Theta\big(q^{d-1}\big).
  \label{eq:Bcount}
\end{equation}
Each distant key is assigned a \emph{profile}, a point of the ambient space, and each recent query a \emph{type}, one of the hyperplanes,
\[
  \operatorname{prof}:[n]\to\mathbb F_q^{d-1},
  \qquad
  \operatorname{type}:[m]\to\{0,\dots,B-1\},
\]
and a recent query attends to a distant key when that key's profile lies on the query's hyperplane:
\begin{equation}
  \mathbf G_{ij}=\mathbbm 1\{\operatorname{prof}(j)\in H_{\operatorname{type}(i)}\},
  \label{eq:Gdef}
\end{equation}
i.e. recent query of type $h$ attends to its own prefix among the recent tokens and every distant token whose profile lies on $H_h$. $\mathbf G$ is the only component of the mask which is $d$-dependent and its structure dictates the VC-dimension (Theorem \ref{thm:props}). In order to establish a VC lower bound for Theorem \ref{thm:props}(iii), we further characterize $\operatorname{prof}$ and $\operatorname{type}$. Let $e_1,\dots,e_{d-1}$ be the standard basis of $\mathbb F_q^{d-1}$ and, for $R\subseteq[d-1]$, let $H_R$ be the witness hyperplane. We impose the condition that there are distinct distant positions $j_1,\dots,j_{d-1}$ with
$\mathrm{prof}(j_\ell)=e_\ell$, and a recent index $\tau$ such that,
for every $R\subseteq[d-1]$,  $H_R$ occurs at indices
$i_R^-<\tau\le i_R^+$. We give two concrete examples:
\begin{enumerate}[leftmargin=*, labelsep=0.5em]
    \item The \emph{positional} assignment takes $\operatorname{prof}(j)$ to be the base-$q$ digits of $(j-1)\bmod q^{d-1}$ and $\operatorname{type}(i)=i\bmod B$.
    \item The \emph{content-based} assignment fixes a hash $x$ which maps each token to a point of $\mathbb F_q^{d-1}$. This mapping can be fixed or learned. Let $u_t$ be the hashed vector at position $t$. A distant key has $\operatorname{prof}(j)=u_j$ and for a hashed direction $a$, a recent query has $\operatorname{type}(i)=H_{a,a^\top u_{n+i}}$ (the hyperplane through its own cell). A query sees every distant key whose token repeats its own.
\end{enumerate}  

\textbf{Scaling with sequence length.} We define the family by prescribing the field size as a function of sequence length and VC-dimension. For each fixed $d\geq 2$, we choose a prime $q=\Theta(n^{1/d})$, where $n,m=\Theta(T)$. Writing $P=q^{d-1}$ for the number of profile cells, this gives $P=\Theta(T^{1-1/d})$ and $B=\Theta(T^{1-1/d})$. For each fixed $d$, a sufficiently large $T$ ensures $P\leq n$ and $2B\leq m$, as required by our positional construction. Moreover, for $d=1$, when $\mathbb F_q^{0}$ is a single point, we use $B=1$ and $H_0=\mathbb F_q^{0}$. Then $\mathbf G$ is all ones and the block form becomes $\mathbf M^{(1)}=\mathbf L_T$, ordinary causal masking.

\begin{restatable}[Properties of hard-routing SMat masks $\mathbf M^{(d)}$]{thm}{propertiesofsmatmasks}
\label{thm:props}
Let $T\ge 2$ and $1\le d<\lfloor\log_2T\rfloor$. Then \emph{(i)} $\mathbf M^{(d)}\in\{0,1\}^{T\times T}$ is causal with
$\mathbf M^{(d)}_{tt}=1$ for all $t$;
\emph{(ii)} $\mathbf M^{(1)}=\mathbf L_T$, 
\emph{(iii)} $\mathrm{VC}(\mathbf M^{(d)})=d$; and
\emph{(iv)} the number of nonzero entries of $\mathbf M^{(d)}$, given by $\mathrm{nnz}(\mathbf M^{(d)})$, satisfies $\mathrm{nnz}(\mathbf M^{(d)})=\Theta(T^2)$.
\end{restatable}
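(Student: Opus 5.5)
Parts (i), (ii) and (iv) follow directly from the block form \eqref{eq:blockform}; (iii) is the real content and needs matching lower and upper bounds.

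For (i), every entry of $\mathbf L_n$, $\mathbf 0$, $\mathbf G$ and $\mathbf L_m$ lies in $\{0,1\}$. The upper-right block is zero and both diagonal blocks are lower triangular, so $\mathbf M^{(d)}_{ij}=0$ whenever $j>i$. A distant key $j\le n$ seen by a recent query $n+i$ always satisfies $j<n+i$. The diagonal consists of the diagonals of $\mathbf L_n$ and $\mathbf L_m$, which are all ones. For (ii), the $d=1$ convention ($B=1$, $H_0=\mathbb F_q^0$, every profile on it) makes $\mathbf G=\mathbf 1_{m\times n}$. The block matrix is then exactly the lower-triangular all-ones matrix $\mathbf L_T$. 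For (iv), the upper bound $\mathrm{nnz}\le T^2$ is trivial. The lower bound comes from the diagonal blocks alone: $\mathrm{nnz}(\mathbf L_n)+\mathrm{nnz}(\mathbf L_m)\ge (n^2+m^2)/2\ge T^2/4$, using $n+m=T$. No property of $\mathbf G$ is needed.

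For the lower bound in (iii), $\mathrm{VC}\ge d$, I would shatter the $d$ keys $\{j_1,\dots,j_{d-1},\,n+\tau\}$ using the imposed witness structure. Take a target subset $S=\{j_\ell:\ell\in R\}$, possibly together with $n+\tau$. Use query row $n+i_R^+$ if $n+\tau\in S$, and row $n+i_R^-$ otherwise. Both rows have type $H_R$, so among the $j_\ell$ they see exactly $\{j_\ell:\ell\in R\}$. The first sees recent key $n+\tau$ because $\tau\le i_R^+$; the second does not because $i_R^-<\tau$. I also have to check that these rows see nothing else in the candidate set, which is immediate since the set contains only the $j_\ell$ and $n+\tau$. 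The witness hyperplanes themselves exist: for $R\ne\emptyset$ take $\sum_{\ell\in R}x_\ell=1$, and for $R=\emptyset$ take $x_1=\lambda$ with $\lambda\notin\{0,1\}$, valid for $q\ge3$ (handle $q=2$ separately, e.g.\ via $\sum x_\ell=0$ when $d-1\ge2$). The positional assignment realizes them because types cycle with period $B$ and $2B\le m$. For $d=1$, a single key is shattered by $\mathbf L_T$.

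The upper bound $\mathrm{VC}\le d$ is the main obstacle. Suppose $X$ is a shattered set with $|X|=d+1$, and split $X=X_D\cup X_R$ into distant and recent keys. First I would observe that the recent rows restricted to recent keys, and all rows restricted to distant keys through $\mathbf L_n$, are prefix families. Then I would argue the following.
\begin{itemize}
\item $|X_R|\le1$: two recent keys cannot be shattered, since the rows' recent parts are nested prefixes.
\item A distant query's row is a prefix of $[n]$ with nothing recent, and recent queries contribute traces on $X_D$ of the form $H\cap X_D$.
\item Hyperplanes in $\mathbb F_q^{d-1}$ have VC dimension at most $d$, via the Radon/dimension argument for affine subspaces.
\end{itemize}
The delicate part is combining the prefix rows with the hyperplane rows. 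If $X_R=\{r\}$, then each subset of $X_D$ containing $r$ must come from a hyperplane row, so the hyperplane traces must shatter $X_D$ of size $d$. I would try to show that $d$ points shattered by affine hyperplanes over $\mathbb F_q^{d-1}$ force a contradiction with nested realization of the complementary subsets, or else bound the VC dimension of hyperplanes over finite fields by $d-1$ when the points are required to be shattered including the full set. Linear algebra does this: if all $d$ points lie on a common hyperplane $H$ and every proper subset is also cut out, consider their affine span. If $X_R=\emptyset$, the $d+1$ distant points must be shattered by hyperplanes together with prefixes. Since adding one chain raises VC dimension by at most one, it suffices to show hyperplane traces have VC dimension at most $d-1$ on points of $\mathbb F_q^{d-1}$. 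This holds because a shattered set must be affinely independent, which caps its size at $d$ points, and then the empty set or the full set has to be achieved with care. If this bookkeeping fails, I would fall back on the condition $d<\lfloor\log_2T\rfloor$ and a counting argument on distinct traces.
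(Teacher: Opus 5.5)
Parts (i), (ii), (iv) and the lower bound in (iii) are fine and essentially match the paper. Your $R=\emptyset$ fallback $\sum_\ell x_\ell=0$ in fact works for every $q$ and every $d-1\ge1$, so the $\lambda\notin\{0,1\}$ case split is unnecessary.

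The genuine gap is the upper bound when all $d+1$ shattered keys are distant. You reduce this case to the claim that adjoining a chain to a set system raises its VC dimension by at most one. That claim is false. On $\{1,2\}$ the family $\{\{2\}\}$ has VC dimension $0$, the chain $\emptyset\subset\{1\}\subset\{1,2\}$ has VC dimension $1$, and their union shatters $\{1,2\}$. In general one only gets $+2$. With $\mathrm{VC}(\text{hyperplanes in }\mathbb F_q^{d-1})=d-1$, that yields $\mathrm{VC}(\mathbf M^{(d)})\le d+1$, one too many. Your counting fallback cannot close this either. With $D=d-1$, Sauer--Shelah permits up to $\sum_{i\le D}\binom{D+2}{i}=2^{D+2}-D-3$ hyperplane traces on $D+2$ keys, and the prefix rows supply at most $D+3$ more, for exactly $2^{D+2}$. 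The hypothesis $d<\lfloor\log_2T\rfloor$ only guarantees enough rows and does not help an upper bound.

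The missing idea is to use \emph{which} traces hyperplanes cannot realize, not how many. View the $D+2$ profiles in $\mathbb F_q^{D}$ as an indexed family, so that two keys sharing a profile also count; they are then affinely dependent. Restricting a nontrivial dependence to its support gives $C$ with $|C|\ge2$, $\sum_{j\in C}\lambda_j\operatorname{prof}(j)=0$, $\sum_{j\in C}\lambda_j=0$ and all $\lambda_j\ne0$. Any affine hyperplane containing the profiles of $C\setminus\{a\}$ then also contains $\operatorname{prof}(a)$. So no recent row has trace $C\setminus\{a\}$ on $C$, for any $a\in C$. These $|C|\ge2$ sets are pairwise incomparable. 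The traces of the distant rows (prefixes of $[n]$) on $C$ form a chain, so they contain at most one of them, a contradiction.

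The same observation is what your case $X_R=\{r\}$ needs. There the $d=D+1$ distant keys must be shattered by hyperplane traces alone. The full trace puts all their profiles in one hyperplane of affine dimension $D-1$, so they are affinely dependent and some $C\setminus\{a\}$ is unrealizable. As written, that case is only gestured at (``consider their affine span'') and ignores repeated profiles. It also starts from the off-by-one assertion that hyperplanes in $\mathbb F_q^{d-1}$ have VC dimension at most $d$, rather than exactly $d-1$.
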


\subsection{Chunkwise SMat-attention}
\label{subsec:chunkwise}

\textbf{Splitting the mask.}
Split the block form into
\begin{equation}
  \mathbf M^{(d)}
  =\underbrace{
    \begin{pmatrix}\mathbf L_n&\mathbf 0\\\mathbf 0&\mathbf L_m\end{pmatrix}
   }_{\text{two independent causal masks}}
  +\underbrace{
    \begin{pmatrix}\mathbf 0&\mathbf 0\\\mathbf G&\mathbf 0\end{pmatrix}
   }_{\text{long range}} .
  \label{eq:factor}
\end{equation}
Since $\bar y_i$ is linear in the mask, the two attention branches can be computed separately and added before the final division. By construction $\mathbf G_{ij}$ depends on $i$ only through
$\operatorname{type}(i)$ and on $j$ only through $\operatorname{prof}(j)$. It
therefore factors through the point--hyperplane incidence matrix
\begin{equation}
  \mathbf C\in\{0,1\}^{B\times q^{d-1}},
  \qquad
  \mathbf C_{hx}=\mathbbm 1\{x\in H_h\},
  \qquad
  \mathbf G_{ij}=\mathbf C_{\operatorname{type}(i),\operatorname{prof}(j)} .
  \label{eq:C}
\end{equation}
Writing $\mE\in\{0,1\}^{n\times q^{d-1}}$ and $\mR\in\{0,1\}^{m\times B}$ for the
one-hot matrices of $\operatorname{prof}$ and $\operatorname{type}$, $\mathbf G=\mR\,\mathbf C\,\mE^\top$. Distant keys sharing a profile can be pooled once and reused by every query that sees that profile, and recent queries sharing a type can be answered from one aggregated state.

\textbf{The long-range branch.}
We apply the factorization $\mathbf G=\mR\,\mathbf C\,\mE^\top$ one factor at a
time. $\mE^\top$ pools the distant states into a \emph{profile table}
$\{F_x\}$, one state per profile; $\mathbf C$ converts this into a
\emph{type table} $\{U_h\}$, one state per query type; $\mR$ contracts
each query against the entry for its own type:
\begin{equation}
  F_x:=\!\!\!\sum_{j\le n:\,\operatorname{prof}(j)=x}\!\!\!\mZ_j,
  \qquad
  U_h:=\sum_x\mathbf C_{hx}F_x,\label{eq:schedule}
  \end{equation}
  where $\bar y^{\mathrm{lr}}_{n+i}=\phi_{n+i}^\top U_{\operatorname{type}(i)}$ for $i\in[m]$, and $\bar y^{\mathrm{lr}}_i=0$ for $i\leq n$.
Queries of a common type are contracted together in one matrix product and
their rows scattered back into chronological order.

\begin{figure}
    \centering
    \includegraphics[width=0.75\linewidth]{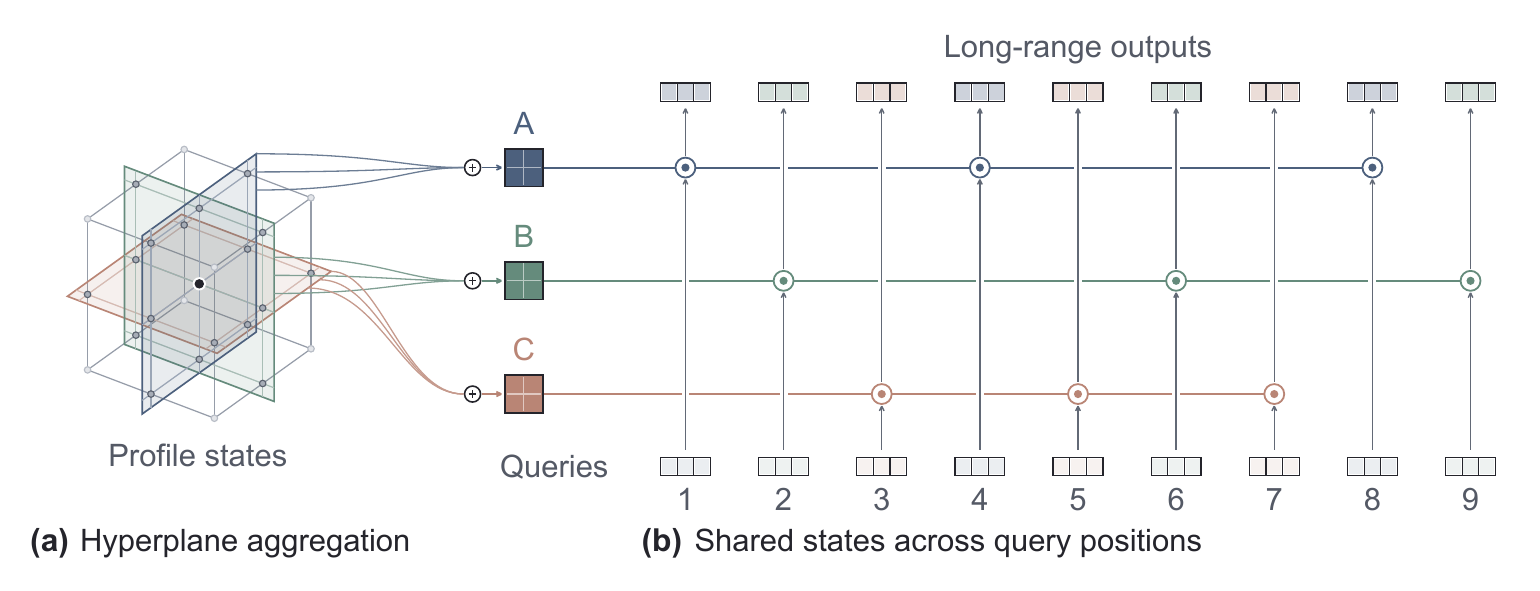}
    \caption{{Hyperplane aggregation and shared-state reads in SMat-Attention.}
    \textbf{(a)} A profile table is indexed by $\mathbb{F}_3^3$.
    Three coordinate hyperplanes each aggregate their
    incident profile states into a type state (A-C). The central
    profile belongs to all hyperplanes, so its state contributes to all
    the sums. \textbf{(b)} Each type state supplies every query of that type.
    The horizontal lines carry fixed shared
    states throughout these reads. The output scatters its results back to this order.  The figure shows augmented long-range
    contributions, before causal addition and final normalization.}
    \label{fig:placeholder}
\end{figure}
\textbf{The causal branch.}
Following standard chunkwise formulations of linear attention \citep{hua2022transformerqualitylineartime, beck2026tiled, yang2024gated}, partition $[n]$ and $(n,T]$ separately into $N_c$ chronological chunks
$I_1,\dots,I_{N_c}$ of width at most $c$; Set $n=c\lfloor T/2c\rfloor$, so no
chunk crosses block boundaries. Each chunk forms its summary $D_b=\sum_{j\in I_b}\mZ_j$, an exclusive prefix scan \citep{blelloch1990prefix, yau2026sequentialparalleldualityprefixscannable} over chunks gives the incoming state $S^{\mathrm{in}}_b=\sum_{a<b}D_a$, and all chunks then run in
parallel:
\begin{equation}
  \bar y^{\,\mathrm{loc}}_i
  =\underbrace{\phi_i^\top S^{\mathrm{in}}_b}_{\text{earlier chunks}}
  +\underbrace{\phi_i^\top\!\!\sum_{j\in I_b,\,j\le i}\!\!\mZ_j}_{\text{within chunk }b},
  \qquad
  \bar y_i=\bar y^{\,\mathrm{lr}}_i+\bar y^{\,\mathrm{loc}}_i,
  \qquad i\in I_b,
  \label{eq:local}
\end{equation}
followed by a row-wise division. 

\begin{algorithm}[hbt!]
\caption{SMat forward pass for one attention head}
\label{alg:prefill}
\begin{algorithmic}[1]
\REQUIRE $\mQ,\mK,\mV$; maps $\operatorname{prof},\operatorname{type}$;
  incidence $\mathbf C$; chunk width $c$; $n=c\lfloor T/2c\rfloor$
\STATE Compute $\phi_i$ and $\mZ_j$ as in Equation \ref{eq:augmented-kv-state}.
\STATE Pool $F_x=\sum_{\operatorname{prof}(j)=x}\mZ_j$ over $j\le n$ by grouped reduction.
\STATE Tabulate $U_h=\sum_x\mathbf C_{hx}F_x$ for every type $h\in\{0,\dots,B-1\}$.
\STATE \textbf{for all} types $h$ \textbf{in parallel}: contract all queries of type $h$ against $U_h$ and scatter the rows into chronological order.
\STATE \textbf{for all} chunks $b$ \textbf{in parallel}: $D_b=\sum_{j\in I_b}\mZ_j$.
\STATE Exclusive prefix scan over chunks, separately within $[n]$ and within $(n,T]$.
\STATE \textbf{for all} chunks $b$ \textbf{in parallel}: evaluate Equation \ref{eq:local}, add the long-range term, and normalize once.
\end{algorithmic}
\end{algorithm}

\textbf{Implicit incidence multiplication.} We compute the type table $U=\mC F$ without materializing $\mC$.
Let $\mathcal A$ contain the normalized nonzero directions in
$\mathbb F_q^{d-1}$, with the first nonzero coordinate of each $a$ equal to one.
Index each type by its defining pair $(a,b)$, using the same ordering as
$\operatorname{type}$.
For a fixed direction $a$, every point $x$ lies in exactly one hyperplane
$H_{a,b}$, namely the one with $b=a^\top x$.
Algorithm~\ref{alg:smat-incidence} therefore groups the profile states by this
offset and accumulates $U_{a,b}=\sum_{x:\,a^\top x=b}F_x$, where the offsets use finite-field arithmetic and the state additions use ordinary arithmetic in $\mathbb R^{r\times p}$.

\begin{algorithm}[hbt!]
\caption{Implicit incidence tabulation}
\label{alg:smat-incidence}
\begin{algorithmic}[1]
\REQUIRE Profile states $\{F_x\}$, field size $q$, dimension $d$ 
\STATE \textbf{if} {$d=1$} \textbf{then}  $U_0 \gets F_0$, and \textbf{return} $U$
\FOR{$a\in\mathcal A$ \textbf{in parallel}}
    \STATE $U_{a,b}\gets 0_{r\times p}$ for all $b\in\mathbb F_q$
    \FOR{$x\in\mathbb F_q^{d-1}$}
        \STATE $b\gets a^\top x$ \hspace{1.9cm} \textcolor{blue}{\texttt{/$\star$ Finite-field arithmetic}}
        \STATE $U_{a,b}\gets U_{a,b}+F_x$ \qquad \textcolor{blue}{\texttt{/$\star$ Ordinary state addition}}
    \ENDFOR
\ENDFOR
\STATE \textbf{return} $U$
\end{algorithmic}
\end{algorithm}

For $d\geq2$, the algorithm performs 
\begin{equation}|\mathcal A|q^{d-1}
    =\frac{q^{d-1}-1}{q-1}\,q^{d-1}
    =Bq^{d-2}
    =\Theta(q^{2d-3})
\end{equation}
state additions: one for each nonzero of $\mC$.
We enumerate points with a base-$q$ counter and maintain $a^\top x$ as its entries change. Over a full traversal, the counter changes $O(q^{d-1})$ entries, so generating the offsets takes $O(|\mathcal A|q^{d-1})$ field operations. So, the total work is $O(q^{2d-3}rp)$, and the profile and type tables occupy $O((q^{d-1}+B)rp)=O(q^{d-1}rp)$ words. For $q=\Theta(T^{1/d})$, these bounds become $O(T^{2-3/d}rp)$ work and $O(T^{1-1/d}rp)$ words. If $d=1$, we directly use the single profile state.\looseness=-1

\begin{wrapfigure}[13]{l}{0.40\textwidth}
    \centering
    \includegraphics[width=\linewidth]{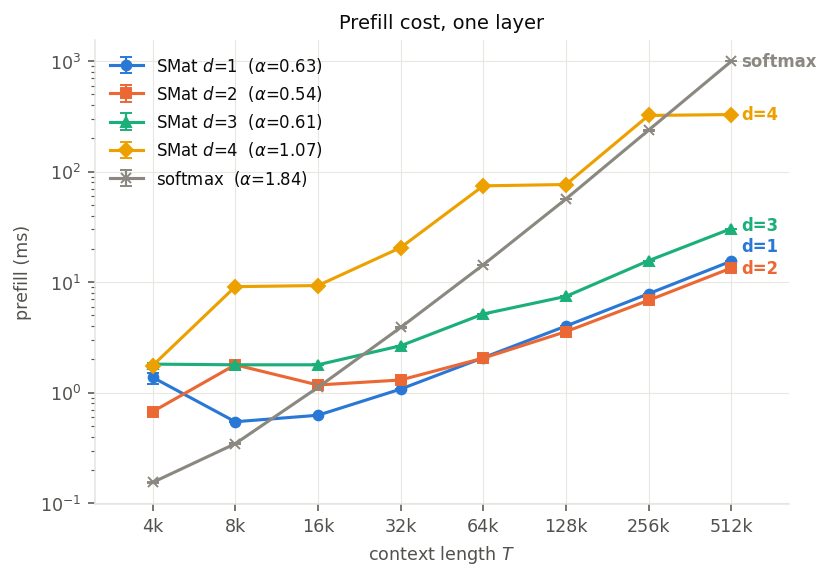}
    \caption{Prefill cost against fused causal attention (bf16, $8$ heads, $r=64$,
  chunk $128$; median of $15$ timed runs). }
\label{fig:prefill}
\end{wrapfigure}

\begin{restatable}[Chunkwise SMat-attention]{thm}{chunkwiseSmatAttention}
\label{thm:exact} Fix $d\geq 1$, and let $\mathbf M^{(d)}$ be the mask of
Section~\ref{subsec: mask constructions} under the prescribed sequence-length scaling. Assume the kernel has a
nonnegative feature factorization of dimension $r$, and every normalizer
$\bar y_{i,p}$ is positive. Then, for large $T$, Algorithm~\ref{alg:prefill} computes the
attention outputs $o_1,\dots,o_T$, in
\begin{equation}
  O\big((T^{2-3/d}+T)\,rp\;+\;T\,c\,(r+p)\big)
  \label{eq:cost}
\end{equation}
work, excluding evaluation of the feature maps, using
$O\big((T^{1-1/d}+T/c)rp+c^2\big)$ words of working memory. The backward pass
has the same asymptotic cost.
\end{restatable}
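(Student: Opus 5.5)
The proof has two parts, correctness and cost. For correctness, I will use the linearity of the augmented output $\bar y_i$ in the mask, from Equation~\ref{eq:augmented-attention-output}. Split $\mathbf M^{(d)}$ as in Equation~\ref{eq:factor}. Then it suffices to show two things: the long-range branch returns $\phi_i^\top\sum_j (\mathbf G)_{i-n,j}\mZ_j$ exactly, and the causal branch returns the contribution of the block-diagonal causal mask exactly.

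For the long-range branch, I will substitute $\mathbf G=\mR\mathbf C\mE^\top$ from Equation~\ref{eq:C} and expand:
\[
\phi_{n+i}^\top U_{\operatorname{type}(i)}
=\phi_{n+i}^\top\sum_x \mathbf C_{\operatorname{type}(i),x}\sum_{j:\operatorname{prof}(j)=x}\mZ_j
=\phi_{n+i}^\top\sum_{j\le n}\mathbf G_{ij}\mZ_j .
\]
Algorithm~\ref{alg:smat-incidence} computes $U=\mathbf C F$ correctly. The reason is that for each normalized direction $a$, the hyperplanes $\{H_{a,b}\}_b$ partition $\mathbb F_q^{d-1}$, so each $F_x$ is added exactly once, namely to $U_{a,a^\top x}$. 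The case $d=1$ is trivial.

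For the causal branch, I will note two facts. The choice $n=c\lfloor T/2c\rfloor$ keeps every chunk inside a single block. Within a block, the exclusive scan gives $S^{\mathrm{in}}_b=\sum_{a<b}D_a$, so Equation~\ref{eq:local} equals $\phi_i^\top\sum_{j\text{ in same block},\,j\le i}\mZ_j$. Adding the two branches and dividing by the positive normalizer then gives $o_i$.

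For the cost, I will account step by step in Algorithm~\ref{alg:prefill}:
\begin{itemize}
\item Forming the $\mZ_j$ and pooling into $F$ takes $O(Trp)$.
\item Tabulation takes $O(Bq^{d-2}rp)=O(q^{2d-3}rp)=O(T^{2-3/d}rp)$ under the scaling $q=\Theta(T^{1/d})$, as already computed after Algorithm~\ref{alg:smat-incidence}. The offset enumeration with the base-$q$ counter contributes only lower-order field operations.
\item Query contraction against $U_{\operatorname{type}(i)}$ costs $O(rp)$ per query, so $O(Trp)$ in total.
\item Chunk summaries and the scan over $O(T/c)$ chunks cost $O(Trp)$.
\item The intra-chunk term, computed as $(\Phi_b\Psi_b^\top\odot \mathbf L)\bar V_b$, costs $O(c^2 r+c^2p)$ per chunk, hence $O(Tc(r+p))$ over $T/c$ chunks. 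The $\phi_i^\top S^{\mathrm{in}}_b$ term costs $O(Trp)$.
\end{itemize}
Summing these gives Equation~\ref{eq:cost}. For memory, the profile and type tables take $O(T^{1-1/d}rp)$ words, the per-chunk states take $O((T/c)rp)$, and one $c\times c$ score block takes $O(c^2)$. Excluding inputs and outputs, this matches the stated bound.

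For the backward pass, every step is a linear map, or a bilinear one in $(\phi,\mZ)$, followed by a row division. Reverse-mode differentiation therefore runs the transposed schedule:
\begin{itemize}
\item Gradients with respect to $U_h$ are scatter-adds of $\phi_{n+i}\,\partial\bar y_{n+i}^\top$ over queries of type $h$.
\item Gradients with respect to $F_x$ are $\sum_h\mathbf C_{hx}\,\partial U_h$, i.e.\ $\mathbf C^\top$ applied implicitly. This is the same loop as Algorithm~\ref{alg:smat-incidence} but gathering instead of scattering, with the same $O(q^{2d-3}rp)$ cost.
\item Gradients with respect to $\mZ_j$ are read from $\partial F_{\operatorname{prof}(j)}$.
\item The causal branch uses the standard reverse scan, which has the same cost as the forward scan.
\end{itemize}

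The main obstacle is the backward pass. I have to check carefully that the $\mathbf C^\top$ product can also be done implicitly in $\Theta(\mathrm{nnz}(\mathbf C))$ time without materializing $\mathbf C$. For each $x$ and each $a\in\mathcal A$, I would read $\partial U_{a,a^\top x}$, reusing the same counter-based offset generation. I also have to check that recomputing the forward quantities, rather than storing them, keeps working memory within the stated bound.
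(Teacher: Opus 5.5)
Your proposal is correct and follows essentially the same route as the paper: linearity of $\bar y_i$ in the mask, evaluating $\mathbf G\mZ=\mR(\mathbf C(\mE^\top\mZ))$ from the right with one state addition per nonzero of $\mathbf C$, chunk-aligned scans for the two causal blocks, and a transposed (adjoint) schedule for the backward pass. The step you flag as the main obstacle is settled in the paper just as you sketch it: each point lies on exactly one hyperplane per normalized direction, so running the per-direction loop in gather form applies $\mathbf C^\top$ in $\Theta(\operatorname{nnz}(\mathbf C))$ work, and memory stays in bound because only $U$ and the chunk states are kept while each $c\times c$ tile is rebuilt when its chunk is visited.
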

The proof of Theorem \ref{thm:exact} is in Appendix \ref{sec:chunking_proofs}.

\subsection{Gating}
Recent variants of linear attention, such as gated linear attention ~\citep{yang2024gated} and Mamba-2 ~\citep{dao2024transformers}, yield increased performance by weighting each edge according to the product of the gates between its source and target. To integrate our binary mask $\mathbf M^{(d)}$ into these continuous layers, we introduce a gating mechanism. Let $a_1, \ldots, a_T \in (0, 1]$ be per-token scalar decay gates, and $\lambda_i = \sum_{k \le i} \log a_k$. We replace the binary mask $\mathbf M^{(d)}$ with the continuous mask $\widetilde{\mathbf M}^{(d)}$, given by:
\begin{equation}
\label{eq:gated}
    \widetilde{\mathbf M}^{(d)}_{ij} = \mathbf M^{(d)}_{ij} \exp(\lambda_i - \lambda_j), \quad j \le i
\end{equation}
Note that the write scale of these layers is the key gate $h_j$ from \eqref{eq:augmented-kv-state}, which is already subsumed within $\mZ_j$, and that the entries of $\widetilde{\mathbf M}^{(d)}$ are non-negative with the same support as $\mathbf M^{(d)}$. We show this additional gating does not affect training or decoding complexity in Lemma \ref{thm:gated}. \looseness=-1
\subsection{Memory-efficient decoding}
\label{subsec:decoding}

For SMat-Attention with hard-routing, we compress the distant block once and reuse its summaries throughout the decoding process. For this, we begin by fixing the distance/recent boundary $n$ in advance. After processing the distant tokens, we tabulate the type states $\{U_h\}_{h<B}$ and discard the intermediate profile states $\{F_x\}$. At each subsequent step $t>n$, we update a single running state $S_t = S_{t-1} + Z_t$, initialized with $S_n=0$, and contract the query feature $\phi_t$ against $U_{\mathrm{type}}(t-n) + S_t$ before normalization. Figure \ref{fig:decoding-image} illustrates this separation between fixed distant memory and evolving recent memory. The resulting decoder retains $B+1$ states and performs $O(rp)$ state-update and readout work per token after tabulation. Appendix~\ref{app:window} relaxes the fixed boundary, extending the construction to decoding at any length without knowing $T$ in advance.\looseness=-1

\begin{figure}[hbt!]
    \centering\includegraphics[width=0.75\linewidth]{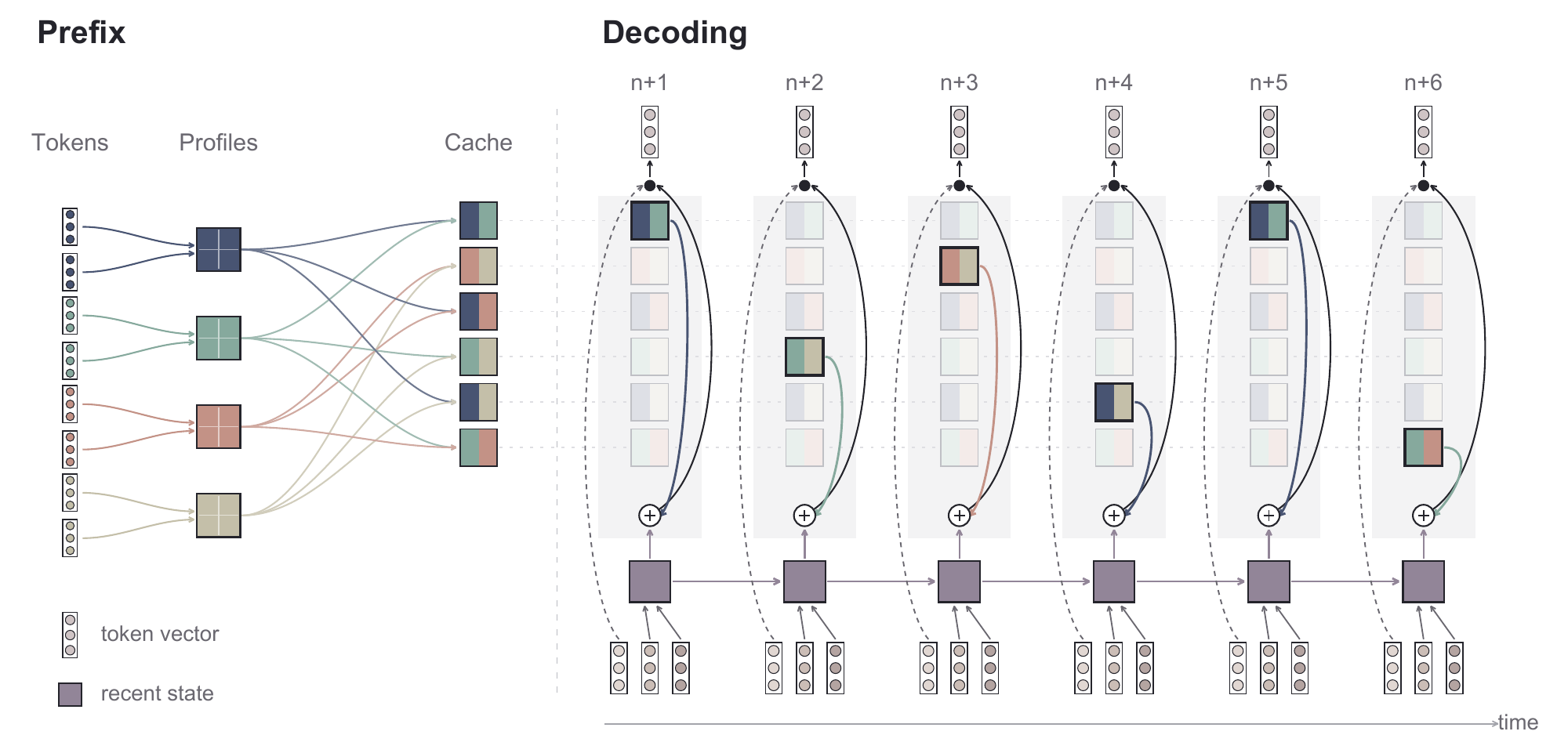}
    \caption{Cached decoding for additive SMat-Attention with hard-routing.
\textbf{Left:} distant key-value contributions pool into profile
states, which are added via the incidence structure to form the type cache. Colors identify the contributing profiles
within each cached summary. \textbf{Right:} successive decoding steps access the fixed cache while maintaining one running state for recent tokens. The
input vectors at each step are the query, key, and value. The key and value update the running state before answering the query. Repeated cache banks depict successive views of the shared memory.}
    \label{fig:decoding-image}
\end{figure}
\begin{restatable}[Streaming decoding]{thm}{streamingdecoding}
\label{thm:decode}
Assume the hypotheses of Theorem~\ref{thm:exact}, and suppose the total
sequence length $T$ is fixed in advance so that the distant/recent boundary $n$
is known. Once Algorithm~\ref{alg:prefill} has been run over the distant block,
each subsequent token can be decoded in $O(rp)$ time, independent of
$T$, from a cache of $O(T^{1-1/d}rp)$ words.
\end{restatable}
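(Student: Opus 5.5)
The proof is an invariant argument for the decoder described in Section~\ref{subsec:decoding}, followed by a cost count. The correctness claim is that for every $t=n+i$ with $i\in[m]$, the quantity $\phi_t^\top\bigl(U_{\operatorname{type}(i)}+S_t\bigr)$, with $S_t=\sum_{n<j\le t}\mZ_j$, equals the augmented output $\bar y_t$ of Equation~\ref{eq:augmented-attention-output}. The first step is to read off row $t$ of the block form (Equation~\ref{eq:blockform}). Its support is $\{j\le n:\mathbf G_{ij}=1\}\cup\{n<j\le t\}$. By linearity of $\bar y_t$ in the mask, via the split in Equation~\ref{eq:factor}, the output separates as $\bar y_t=\bar y^{\mathrm{lr}}_t+\phi_t^\top\sum_{n<j\le t}\mZ_j$. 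The second summand is $\phi_t^\top S_t$, by induction on $t$ from $S_n=0$ using the update $S_t=S_{t-1}+\mZ_t$. For the first summand I would invoke the factorization $\mathbf G=\mR\mathbf C\mE^\top$ (Equation~\ref{eq:C}) and the definitions in Equation~\ref{eq:schedule}, which give $\bar y^{\mathrm{lr}}_t=\phi_t^\top U_{\operatorname{type}(i)}$.

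The key point is that $U_h$ depends only on distant tokens $j\le n$. It is therefore fully determined once Algorithm~\ref{alg:prefill} (steps 2--3, i.e.\ Algorithm~\ref{alg:smat-incidence}) has processed $[n]$, and it is never revised later. Because $n$ is fixed in advance, $\operatorname{type}(i)$ is computable from $i$ alone (positionally as $i\bmod B$, or from the hash of the current token in the content-based case), without any future information. Once the type tables are built, the profile states $F_x$ are not needed again and can be discarded. Dividing by the last coordinate, which is positive by the hypothesis of Theorem~\ref{thm:exact}, then gives $o_t$.

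The cost count follows. Each step does one rank-one outer product $\psi_t\bar v_t^\top$ costing $O(rp)$, one addition into $S_t$ costing $O(rp)$, one addition $U_{\operatorname{type}(i)}+S_t$ (or equivalently two vector--matrix contractions) costing $O(rp)$, one type lookup, and one division costing $O(p)$. None of this depends on $T$. The cache consists of $B+1$ matrices in $\R^{r\times p}$. Under the prescribed scaling, Equation~\ref{eq:Bcount} gives $B=\Theta(q^{d-1})=\Theta(T^{1-1/d})$, so the cache occupies $O(T^{1-1/d}rp)$ words. For $d=1$ we have $B=1$, which is consistent.

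The main obstacle is the type lookup in the content-based case. Computing $H_{a,a^\top u_t}$ takes $O(d)$ field operations plus the hash evaluation, and I would argue this is $O(1)$ for fixed $d$ and absorbed into the feature-map evaluation, as in Theorem~\ref{thm:exact}. A second point to handle explicitly is indexing the cache by type in $O(1)$: storing the $U_h$ in an array ordered by $(a,b)$ makes this direct.
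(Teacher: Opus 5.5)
Your proposal is correct and follows essentially the same argument as the paper. By linearity of $\bar y_t$ in the mask, the output splits into a read $\phi_t^\top U_{\operatorname{type}(i)}$ from a type table that depends only on the distant block (so the $F_x$ can be discarded) plus $\phi_t^\top S_t$ from a single running state, which gives $O(rp)$ work per token and a cache of $B+1=O(T^{1-1/d})$ states.

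The paper adds only a remark on the gated variant: the update becomes $S\leftarrow a_{n+i}S+\mZ_{n+i}$ and the type-table read is scaled by $\exp(\lambda_{n+i}-\lambda_n)$. The statement as posed does not require this.
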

We provide the proof of Theorem \ref{thm:decode} in Appendix \ref{sec: decoding and horizon free decoding}.

\subsection{Recurrent architectures and learned routing}
 \label{sec:recurrent-extensions}

We extend SMat's profile organization to recurrent sequence
models. The local branch processes the distant and recent blocks
separately, resetting its recurrent state and short convolution at
the boundary. Distant tokens update profile memories $F_x$ (specified in Appendix~\ref{app:training}), which
are aggregated into type summaries $U_h=\sum_x C_{hx}F_x$.
For a recent query, the memory contribution is combined with the
local output before backbone normalization and output projection $
z_i=z_i^{\mathrm{local}}
+\lambda_i \widetilde q_i^\top\sum_h R_{ih}U_h$,
where $R_{ih}$ assigns queries to types and $\lambda_i$ controls the
memory contribution. To combine information across subsets, we introduce an
independent query-dependent scorer over types. Each query selects
the four highest-scoring types per head and applies a softmax over
their scores to obtain the read weights $R_{ih}$, retaining a
fixed number of summary reads.\looseness=-1

The learned selector remains subquadratic because each query scores
a sublinear number of cached types, $B=\Theta(T^{1-1/d})$, rather
than all tokens. Across the sequence, scoring costs
$\Theta(TB)=\Theta(T^{2-1/d})$, a factor of $T^{1/d}$ fewer scores
than token-level all-pairs scoring. Type-summary construction is
also subquadratic: Algorithm~\ref{alg:smat-incidence} costs $O(T^{2-3/d}rp)$, while a
dense contraction costs $O(T^{2-2/d}rp)$.
These routing and aggregation costs are subquadratic for every
fixed finite $d\ge2$ and fixed model dimensions. The linear-time bound for $d\le3$ applies to the hard-routing
algorithm; learned selection instead incurs the
subquadratic scoring cost above. 

For learned content routing, a token's write address is computed
from its current and preceding hidden states, while each recent
query selects summaries using only its own causal representation. A learned projection quantizes the write representation into a
discrete profile in $\mathbb{F}_q^{d-1}$. We train this hash jointly
with the backbone using a straight-through gradient estimator and an auxiliary load-balancing loss. The four-read selector is
learned separately through the softmax weights of its selected
types. Thus, the model learns both where to store information and
which summaries to retrieve; Appendix~\ref{app:content-routing}
gives the details. We show in Appendix \ref{sec: vc dimension proofs} that the VC-dimension of the support of the learned mask continues to be $O(d)$.\looseness=-1

\section{Experiments}
We evaluate SMat on controlled synthetic tasks designed to probe routing and retrieval, followed by long-context language modeling.

\textbf{Implementation and training details.} We implement our models in PyTorch with custom Triton kernels \citep{tillet2019triton}, and our experiments are run on single NVIDIA A100 GPUs.
We use the Zoology training pipeline for MQAR \citep{arora2023zoologymeasuringimprovingrecall}, and AdamW with cosine learning-rate decay. MQAR, joint context--key recall, and the 750M-token PG-19 experiments use the learned four-read routing variant of Section~\ref{sec:recurrent-extensions}. Subset routing and multi-key retrieval use the mask-based constructions. All SMat masks were gated in the recurrent section of the mask. We provide our code\footnote{\url{https://anonymous.4open.science/r/smat_attention/README.md}} and defer details to Appendix \ref{app:training}.

\textbf{Subset Routing.} We first test whether models using the proposed mask can realize the subset-selection patterns predicted by its VC dimension. Let $J=\{j_1,\ldots,j_k\}$ denote $k$ landmark positions in the context.
Each landmark $j_\ell$ stores an independent random payload $x_\ell$ in a
distinct output channel, $v_{j_\ell}=x_\ell e_\ell$, while all non-landmark values are zero.  A query specifies a subset $A\subseteq J$, and the target is $y_A = \sum_{j_\ell\in A} x_\ell e_\ell$. Payloads and requested subsets are resampled across examples, preventing the
model from memorizing fixed input-output mappings.

\begin{wrapfigure}[15]{r}{0.45\textwidth}
\vspace{-10pt}
    \centering    \includegraphics[width=\linewidth]{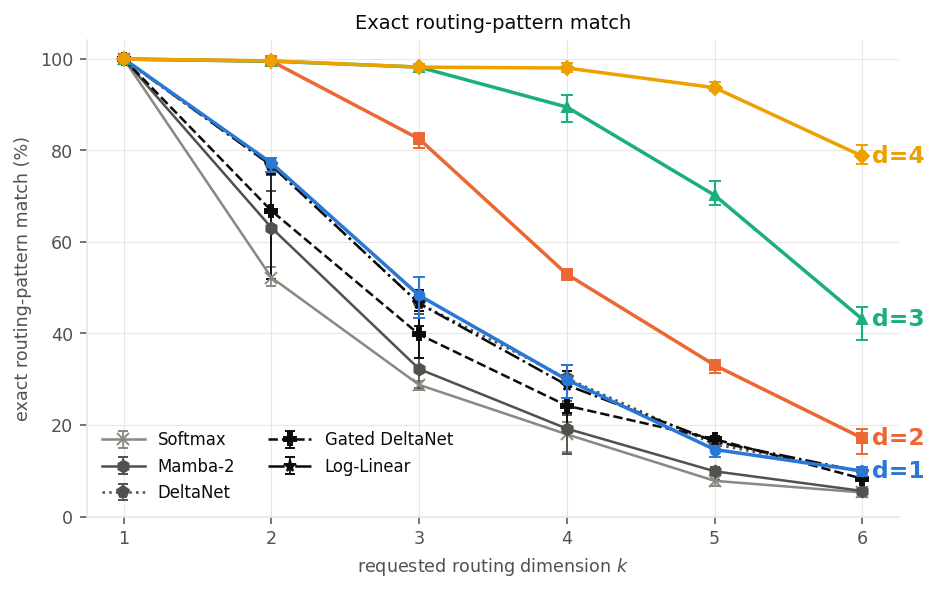}
    \caption{Routing-pattern match against the requested dimension k, after training with
the mask fixed, T = 1024, mean of three seeds. \looseness=-1}
\label{fig:routing-pattern}
\end{wrapfigure}

This task tests the access patterns characterized by VC dimension. If $k$ landmarks are shattered by the row supports of $\mM$ with VC-dimension $d$, then every subset of those landmarks can be selected by some query and $\mM$ can realize all subset-selection patterns over some $d$ landmarks, but not over any $d+1$ landmarks. So, we expect performance to degrade once the requested routing dimension exceeds $d$. The task captures a basic requirement of long-context retrieval: selecting several relevant pieces of information while ignoring other nearby or similar context.\looseness=-1

\textbf{Multi-key retrieval.}
We next test content-based retrieval when a single query must recover multiple items. We randomly place $N$ key-payload pairs throughout the distant context. Each query specifies $k$ target keys and must retrieve the corresponding payloads. Since the locations of the pairs vary across examples, the model cannot solve the task using fixed positional routing. For SMat, any set of at most $d-1$ target profiles in $\mathbb{F}_q^{d-1}$ lies on a common affine hyperplane. This guarantees that the target profiles lie on a common hyperplane, but does not guarantee exact selection: distractors or profile collisions may also lie on that hyperplane. We report exact-support accuracy (Appendix~\ref{app:experiments}), where a query is `correct' only if every requested item is present in the output and every unrequested item is absent.\looseness=-1

\begin{table}[H]
\scriptsize
\centering
\begin{tabular}{lrrr}
\hline
Model & $k=1$ & $k=2$ & $k=3$ \\
\hline
Softmax & 44.29 (49.00) & 66.60 (57.68) & 66.46 (57.55) \\
Linear Baselines & 0.00 (0.00) & 0.00 (0.00) & 0.00 (0.00) \\
LSH bucketing (125 buckets) & 100.00 (0.00) & 1.71 (0.50) & 0.03 (0.03) \\
\rowcolor{gray!12}
SMat ($d=2$) & 100.00 (0.00) & 13.90 (2.13) & 1.29 (0.33) \\
\rowcolor{gray!12}
SMat ($d=3$) & 100.00 (0.00) & 99.82 (0.10) & 34.20 (1.21) \\
\rowcolor{gray!12}
SMat ($d=4$) & 99.98 (0.03) & 99.92 (0.03) & 98.99 (0.03) \\
\hline
\end{tabular}
\caption{Exact routing-pattern match (\%) on multi-key retrieval, by the number $k$ of marked positions. Payloads are resampled each batch and evaluation uses fresh ones. Mean (std) over 3 seeds. SMat performs better at higher $k$ as $d$ increases.\looseness=-1}
\label{tab:mkar}
\end{table}

\textbf{Multi-query associative recall.}
We train on the Zoology MQAR \citep{arora2023zoologymeasuringimprovingrecall} mixture with 4-64 key-value
pairs at sequence lengths 64--256, using two-layer models
with head and state dimension 16.
Evaluation uses 1000
held-out examples per load at sequence lengths 64-256. Accuracy is averaged over query tokens within each example,
then equally over examples, giving equal weight to each load.
Log-Linear Attention \citep{guo2025loglinearattention} uses the same training harness and
budget as the corresponding backbone. Table \ref{tab:mqar-results} reports retrieval accuracy for Mamba-2 and Gated DeltaNet with and without SMat. At every width, for both backbones, the SMat variants perform well.\looseness=-1

\begin{table}[h]
\centering
\scriptsize \caption{Final MQAR accuracy (\%), 32 epochs. Mean (std) over 3 seeds.}
\label{tab:mqar-results}
\begin{tabular}{
    l
    r
    r
    r
    >{\columncolor{gray!12}}r
    >{\columncolor{gray!12}}r
    >{\columncolor{gray!12}}r
}
\toprule
Backbone & Width & Base & Log-Linear & $d=2$ & $d=3$ & $d=4$ \\
\midrule
Gated DeltaNet & 16 & 44.34 (7.58) & 44.17 (10.90) & 49.82 (5.21) & 45.40 (3.70) & 44.63 (1.39) \\
Gated DeltaNet & 32 & 63.99 (4.14) & 68.41 (8.13) & 70.07 (0.48) & 79.74 (2.65) & 75.21 (6.78) \\
Gated DeltaNet & 64 & 70.59 (7.96) & 79.06 (2.11) & 78.88 (8.01) & 87.37 (4.12) & 80.52 (6.07) \\
\midrule
Mamba-2 & 16 & 41.36 (3.07) & 46.47 (7.61) & 47.61 (4.36) & 61.57 (2.41) & 49.19 (5.93) \\
Mamba-2 & 32 & 73.55 (5.02) & 76.57 (4.90) & 75.44 (0.19) & 77.89 (4.45) & 81.90 (2.49) \\
Mamba-2 & 64 & 87.87 (4.26) & 85.85 (1.92) & 91.76 (0.95) & 92.89 (2.13) & 93.14 (1.55) \\
\bottomrule
\end{tabular}
\end{table}

\textbf{Joint Context--Key Recall}
We adapt the multi-query joint recall task of ~\cite{zhan2025jointrecall}, which requires retrieving
values using both a context and a key. We represent 
each binding as an explicit $(\text{context}, \text{key}, \text{value})$
record and independently shuffle the records and queries. Keys repeat across contexts, while values are sampled independently, requiring joint identification of the requested record. We train on 180K examples spanning 4, 16, 128, 256, and 512 bindings, using two-layer, width-64 models for 32 epochs. Table~\ref{tab:joint_recall} reports validation accuracy averaged across the five memory loads. Among the tested SMat settings, $d=3$ achieves the highest
mean accuracy and lowest sample standard deviation on both
backbones. It outperforms the native backbones in mean accuracy,
while scoring above Log-Linear on GDN and below it on Mamba-2.
These results do not establish the cause of the differences
across $d$.

\begin{table}[H]
\small
\centering
\begin{tabular}{lrr}
\hline
Variant & Mamba-2 & GDN \\
\hline
Native & 51.14 (17.74) & 53.78 (2.85) \\
\rowcolor{gray!12}
+ Log-Linear & 66.01 (19.88) & 57.78 (5.36) \\
\rowcolor{gray!12}
+ SMat ($d=2$) & 54.10 (3.72) & 51.25 (5.28) \\
\rowcolor{gray!12}
+ SMat ($d=3$) & 61.65 (3.15) & 60.15 (3.27) \\
\rowcolor{gray!12}
+ SMat ($d=4$) & 59.12 (15.12) & 59.89 (9.34) \\
\hline
\end{tabular}
\caption{Average validation accuracy (\%) on joint context--key recall, averaged over five memory loads. All models use width 64, 32 epochs, learning rate 0.003. Mean (std) over 3 seeds.}
\label{tab:joint_recall}
\end{table}

\textbf{Natural Language Modeling.}
We use language modeling to evaluate whether the additional routing structure from SMat preserves the modeling capabilities of the underlying architectures. We evaluate language modeling on PG-19 \citep{rae2019compressivetransformerslongrangesequence}
using the GPT-2 tokenizer. We train separate models at context lengths of 16K and 32K, each on 300M tokens, and report per-token loss on held-out tests at the corresponding training context length. All models use 8 layers, a hidden width of 384, and the same training data order. SMat, Mamba-2, and the Transformer have 27.8M, 26.9M, and 33.5M parameters, respectively. Across both context lengths, SMat variants achieve slightly lower negative log-likelihood (NLL) than the similarly-sized Mamba-2 baseline, with small differences among \(d=2,3,4\), as seen in Table \ref{tab:pg19-300m} in Appendix ~\ref{app:training}.\looseness=-1

We further evaluate SMat augmentation of GDN and Mamba-2 at a larger scale, using 16-layer models with hidden width 768, a 16K context length, and 750M training tokens. As shown in \ref{tab:pg19-750m}, all SMat variants achieve roughly the same perplexity as its baselines, suggesting that SMat's performance in language-modeling is comparable across both backbone architectures in this setting. 

\section{Conclusion and Future Work}

We introduced SMat-Attention, a framework for explicitly trading off the flexibility of long-range access against computation and memory. Rather than compressing the entire past into a fixed-size state or allowing unrestricted token-level interactions, SMat-Attention provides an intermediate regime in which the richness of long-range routing is controlled by a single parameter, the VC dimension \(d\). This structure yields provably subquadratic training and constant-time decoding for the hard-routing masks, and the learned extensions inherit those bounds up to a subquadratic selector term. Controlled routing experiments show benefits consistent with this expressiveness. Learned SMat extensions improve mean associative and joint context–key recall accuracy in several tested settings, while remaining comparable in performance at small-scale language modeling.\looseness=-1

We focus on a structured finite-feature setting and a fixed-horizon decoding formulation, while some empirical variants introduce additional learned routing and memory updates. Extending the framework to more adaptive routing schemes, dynamic contexts, and larger-scale language models is a natural direction for future work. Our results suggest that explicitly controlling the complexity of long-range access patterns is a useful way to navigate tradeoffs between expressivity and computation.\looseness=-1

\section*{AI use statement}

In this work, we used generative AI tools for literature searches, coding implementation, and to aid in the presentation of our experimental results. We also used AI assistance to edit the manuscript, help design scientific figures, and validate our mathematical claims. The authors take full responsibility for verifying the correctness and originality of all the material in the manuscript, including the theoretical claims, experimental
results, code, and figures.

\section*{Ethics Statement}

In this work, we investigate the computational properties of the attention mechanism used in the transformer, and study tradeoffs between the memory, expressiveness, and computation across various representations. Our evaluation uses synthetic tasks and Google DeepMind's PG-19 corpus for measuring the ability of language models to process long-range contexts. In general, efficiency gains in this line of work may broaden the access to long-context modeling, while also lowering the cost of processing potentially sensitive pieces of textual information. Moreover, models using the proposed mechanism are subject to the standard bias, privacy, and misuse risks associated with language models, and the attention mechanism on its own does not provide safeguards against these risks. We discuss broader impacts in Appendix \ref{sec: broader impacts}.

\section*{Reproducibility Statement}

Section \ref{subsec: mask constructions} specifies the mask construction and attention algorithms, including pseudocode for the forward pass and implicit incidence tabulation. We provide proofs of our theoretical claims in Appendices \ref{sec: miscellaneous proofs}-\ref{sec: decoding and horizon free decoding}. Section \ref{app:training} describes the experimental tasks, datasets and synthetic-data generation procedures,
model configurations, optimization settings, training budgets, evaluation
protocols, and computing hardware. Finally, we provide an anonymous link to a  faithful implementation of our code in the main body of our paper, along with documentation on how to run the experiments. 

\section{Acknowledgements}  We are deeply grateful to Jan van den Brand, Jacob Abernethy, Peter Bartlett, Sarah Liaw, Avi Feller, and Ali Behrouz for sharing their helpful ideas and insightful discussions. Emile Anand is supported by NSF Grant CCF 2338816. Abdullah Ateyeh and Archer Wang are supported by the NSF graduate research fellowship. This research was also sponsored by the Department of the Air Force Artificial Intelligence Accelerator and was accomplished under Cooperative Agreement Number FA8750-19-2-1000. The views and conclusions contained in this document are those of the authors and should not be interpreted as representing the official policies, either expressed or implied, of the Department of the Air Force or the U.S. Government. The U.S. Government is authorized to reproduce and distribute reprints for Government purposes notwithstanding any copyright notation herein. In addition, this work is supported by the National Science Foundation under Cooperative Agreement PHY-2019786 (The NSF AI Institute for Artificial Intelligence and Fundamental Interactions, {http://iaifi.org/}).  

\bibliography{iclr2027_conference}
\bibliographystyle{plainnat}

\newpage

\appendix

\textbf{Outline of the Appendices}.
\begin{itemize}
\item Section \ref{sec: broader impacts} explains the broader societal impacts of our work,
\item Section \ref{sec: related work} lists the related work,
\item Section \ref{sec: miscellaneous proofs} states an auxiliary lemma to motivate our kernel attention mechanism,
\item Section \ref{sec: vc dimension proofs} proves our theorem to characterize the properties of the masks,
\item Section  \ref{sec:chunking_proofs} proves the chunkwise SMat-attention result, 
\item Section \ref{sec: decoding and horizon free decoding} discusses extension to horizon-free decoding, and
\item Section \ref{app:training} lists the training details
\end{itemize}

\section{Broader Impacts}
\label{sec: broader impacts} 
SMat-Attention targets the computational and memory costs of long-context sequence modeling. Namely, more efficient prefill and decoding could lower the energy and hardware requirements of deploying long-context models, potentially widening access to researchers and practitioners without large compute budgets. By making long-range access complexity an explicit architectural parameter, our framework also offers a more interpretable theory on which parts of the context a model can attend to, which may aid analysis of how long-context models retrieve and use information. At the same time, cheaper long-context inference lowers the barrier to processing large volumes of personal or sensitive text, and the general risks of language models, including the generation of misleading or harmful content, apply to systems built on this mechanism.

 \section{Related Work}
\label{sec: related work}
\begin{table}[h]
\centering
\begin{tabular}{lccc}
\toprule
& prefill & per decoded token & decode cache \\
\midrule
Softmax attention   & $\Theta(T^2 d_v)$ & $\Theta(Td_v)$ & $\Theta(T)$ pairs \\
Linear attention    & $\Theta(Trp)$     & $\Theta(rp)$   & $1$ state \\
Log-linear attention & $\Theta(T\log T\,rp)$  & $\Theta(rp\log T)$  & $\Theta(\log T)$ states \\
SMat, VC dim.\ $d$  & $O((T^{2-3/d}+T)rp)$ & $O(rp)$ & $O(T^{1-1/d})$ states \\
\bottomrule
\end{tabular}
\caption{SMat sits between linear and softmax attention. It keeps the
$O(rp)$ per-token decoding cost of linear attention and pays for VC dimension
$d$ in cache size rather than in decoding time.}
\label{tab:cost}
\end{table}

\textbf{Kernel and recurrent attention.} Kernelized attention factorizes the content kernel as
$K(q,k)=\phi_\mQ(q)^\top\phi_\mK(k)$, allowing causal attention to be accumulated in finite-dimensional recurrent states \citep{kacham2023polysketchformer}. For fixed feature and value dimensions, this yields linear work in the sequence length, as well as a recurrent state whose size is independent of $T$ \citep{katharopoulos2020transformersrnnsfastautoregressive}. Random-feature methods such as Performer \citep{choromanski2022rethinkingattentionperformers} approximate the softmax kernel within this framework. More recent architectures enrich the recurrent state and its update \citep{sun2023retentivenetworksuccessortransformer, katsch2024gateloopfullydatacontrolledlinear, qin2024hgrn2gatedlinearrnns, peng2024eaglefinchrwkvmatrixvalued}: for instance, gated linear attention introduces input-dependent retention \citep{peng2021randomfeatureattention, yang2024gated}, whereas DeltaNet uses key-conditioned corrective updates \citep{schlag2021linear, yang2024parallelizing}. These mechanisms improve how a fixed-size state is maintained, but retaining a sequence-length-independent state creates a capacity-recall tradeoff on tasks requiring access to many independent items from the context \citep{arora2025simplelinearattentionlanguage}. Our work differs: SMat-Attention accepts any supplied finite nonnegative feature factorization and changes the \emph{causal support pattern} over which the resulting features interact.

\textbf{Structured sequence mixing and hierarchical memory.} A recurring theme in efficient sequence modeling is that computational savings arise from algebraic structure in the sequence mixing matrix since many efficient sequence models can be interpreted as multiplication by a structured causal matrix. For instance, linear attention induces a lower-triangular structured operator \citep{katharopoulos2020transformersrnnsfastautoregressive}, whereas long-convolution architectures such as Hyena \citep{massaroli2023laughinghyenadistilleryextracting,poli2023hyena} use Toeplitz-like operators \citep{qin2023toeplitzneuralnetworksequence,feinashley2025spectrefftbasedefficientdropin} with FFT-based multiplication.
 Similarly, selective state-space models (SSMs) such as Mamba induce input-dependent semi-separable mixing matrices \citep{gu2023mamba}. Mamba-2 makes this matrix perspective explicit through structured state-space duality, showing an equivalence between SSM recurrences and multiplication by semi-separable matrices \citep{dao2024transformers}. More generally, \citet{choromanski2022block} showed that efficient multiplication by a mask can be lifted to efficient finite-feature masked attention, encompassing causal prefix masks, relative-position operators, and a variety of graph-derived masks. These examples suggest treating the structure of the sequence mixing matrix
itself as a design space. 

 The most closely related work to ours is Log-Linear Attention, which replaces linear attention's single prefix state with $O(\log T)$ summaries of disjoint dyadic buckets maintained through a Fenwick-tree schedule \citep{guo2025loglinearattention}. Its hierarchical matrix structure supports $O(T\log T)$ parallel training and $O(\log T)$ time and memory per decoded token, while query-dependent coefficients select among temporal scales. We explore a different structural axis in SMat-Attention by constructing overlapping binary access patterns from point-hyperplane incidences, and quantifying their combinatorial richness via the VC-dimension. Therefore, SMat Attention's states summarize geometric profiles rather than temporal buckets, and our resulting guarantee is different: we show that after a fixed, known distant prefix has been processed, SMat-Attention decodes each subsequent token in time independent of $T$ using $O(T^{1-1/d})$ cached feature states.
 
\textbf{VC dimension and structured matrix multiplication.}  Beyond sequence modeling, a line of work studies when structured
matrices result in fast matrix-vector multiplication. The VC-dimension is a combinatorial complexity metric that classically measures the richness of a set system through the subsets it realizes \citep{kearns1994introduction}. Recent work connects this quantity to the complexity of matrix-vector multiplication. For instance, after an $\tilde{O}(T^2)$ preprocessing of a $T\times T$ Boolean matrix $\mathbf M$ of VC-dimension $d$, \citet{anand2025structuralcomplexitymatrixvectormultiplication} gives an algorithm for multiplying $\mathbf M$ by an arbitrary vector in $\widetilde O(T^{2-1/d})$ time. This connection motivates our use of VC dimension as an access-complexity measure; our construction additionally exploits point–hyperplane incidence structure to obtain a sharper specialized algorithm.

\textbf{Sparse and hardware-efficient attention.} Hardware-aware algorithms such as FlashAttention reorganize exact softmax attention into on-chip tiles, substantially reducing memory traffic without changing its worst-case quadratic arithmetic complexity \citep{dao2022flashattentionfastmemoryefficientexact}. Sparse-attention methods instead reduce the number of evaluated query-key pairs through local and global windows, random edges, or content-dependent selection \citep{beltagy2020longformerlongdocumenttransformer, 3495724.3497174, yuan2025native,li2019enhancing}. Conversely, SMat-Attention's complete causal mask has $\Theta(T^2)$ non-zeros, and therefore cannot be evaluated efficiently by enumerating all permitted interactions. Although its long-range block contains $\Theta(T^{2-1/d})$ token-level edges, sparsity alone would only yield the corresponding $\Theta(T^{2-1/d})$ computation. Our sharper bound comes from additional reuse: distant keys with the same profile are pooled once, queries with the same type share an aggregate, and the two causal blocks are handled by prefix scans and local dense tiles. Thus, SMat-Attention combines hardware-friendly intrachunk computation with algebraic reuse across chunks; its speedup is not merely a consequence of deleting attention edges. Additionally, the Reformer's LSH attention \citep{kitaev2020reformerefficienttransformer} and the Routing Transformer \citep{roy2020efficientcontentbasedsparseattention} restrict each query to the keys sharing its hash or cluster, and our content-based assignment uses a similar device. The structures differ for $d \ge 3$: hash and cluster buckets partition the keys, so a query sees exactly one cell, whereas each of our hyperplanes contains $q^{d-2}$ profiles, so a query reads a structured union of cells. For $d = 2$ the hyperplanes are single points and SMat reduces to bucketed linear attention.\looseness=-1

\textbf{Linear attention and its variants.}
Linear attention replaces the softmax kernel with a feature map that factorizes the attention matrix, allowing $
O_t = \phi(q_t)^\top \sum_{j\le t}\phi(k_j)v_j^\top $
to be computed using a fixed-size recurrent state. This reduces autoregressive decoding from linear to constant cost per token and allows for efficient parallel training, but compresses the entire history into a fixed-size state \citep{katharopoulos2020transformersrnnsfastautoregressive}. However, vanilla linear attention compresses history into a fixed-size state by accumulating key--value associations, leading to memory-capacity limitations and interference between stored associations \citep{schlag2021linear}. Gated variants augment this recurrence with input-dependent retention factors that modulate the existing state, allowing the model to selectively preserve or forget past information while retaining efficient recurrent inference. For example, gated linear attention updates the state as $S_t = A_t \odot S_{t-1} + v_t k_t^\top$, where the gate $A_t$ determines which parts of the previous state are retained, while preserving their efficient recurrent and parallel forms \citep{yang2024gated}. Gating improves memory management by controlling how much of the existing state is retained, but it does not directly account for what value is already stored at a particular key. Delta-rule models make the update key-specific by using the current prediction error,
$S_t = S_{t-1}(I-\beta_t k_tk_t^\top)+\beta_t v_tk_t^\top$, so that writing at $k_t$ partially removes the value currently associated with that key before inserting $v_t$ \citep{yang2024parallelizing}.

\section{Auxiliary Lemmas}
\label{sec: miscellaneous proofs}

\begin{lemma}
    Assume that the mask $\mM \in \R^{T\times T}$ supports matrix-vector multiplication in time $f_\mM(T)$. Then, the general masked kernel attention algorithm with mask $\mM$ can be implemented in time $O((f_\mM(T) + T)rd_v)$.
\end{lemma}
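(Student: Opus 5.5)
The argument is the standard reduction of \citet{choromanski2022block}, carried out with the augmented states of \eqref{eq:augmented-kv-state}. It has two ingredients: the augmented output $\bar y_i$ is linear in $\mM$, and each $\mZ_j=\psi_j\bar v_j^\top$ is a rank-one $r\times p$ matrix.

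\textbf{Step 1: reduce to matrix--vector products.} Expanding \eqref{eq:augmented-attention-output} coordinatewise gives
\[
\bar y_{i,c}=\sum_{k=1}^r \phi_{i,k}\sum_{j=1}^T \mM_{ij}\,\psi_{j,k}\,\bar v_{j,c},\qquad k\in[r],\ c\in[p].
\]
For each pair $(k,c)$, define the vector $w^{(k,c)}\in\R^T$ by $w^{(k,c)}_j=\psi_{j,k}\bar v_{j,c}$. Building all $rp$ such vectors takes $O(Trp)$ time. The inner sum is then exactly $(\mM w^{(k,c)})_i$, so we perform $rp$ multiplications by $\mM$, at a total cost of $O(f_\mM(T)\,rp)$.

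\textbf{Step 2: contract and normalize.} For each $i$ and $c$, form $\bar y_{i,c}=\sum_k \phi_{i,k}(\mM w^{(k,c)})_i$. This costs $O(Trp)$ in total. We then compute $o_i=\bar y_{i,1:d_v}/\bar y_{i,p}$ for every $i$, which costs $O(Td_v)$; here we assume the normalizers are nonzero, as in Theorem~\ref{thm:exact}.

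\textbf{Step 3: account for the cost.} The total is $O((f_\mM(T)+T)\,rp)$, and since $p=d_v+1\le 2d_v$ for $d_v\ge1$, this equals $O((f_\mM(T)+T)\,r d_v)$. The same count covers the unaugmented numerator and the denominator $\mD=\mathsf{diag}(\mA\mathbf 1_T)$ separately: $\mA\mathbf 1_T=\sum_k \phi_{\cdot,k}\odot \mM\psi_{\cdot,k}$, which needs only $r$ further products.

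No step is really an obstacle. The only care needed is to check that the entrywise product with the kernel factorizes: $(\mM\odot \Phi\Psi^\top)u=\sum_k \mathsf{diag}(\phi_{\cdot,k})\,\mM\,\mathsf{diag}(\psi_{\cdot,k})\,u$. This identity makes the reduction exact rather than approximate, and it shows that no entry of $\mA$ is ever materialized.
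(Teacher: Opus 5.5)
Your proof is correct and is essentially the paper's argument. The paper, following Luo et al., also stacks the vectorized rank-one states $\phi(\mathbf k_j)\mathbf v_j^\top$ and $\phi(\mathbf k_j)$ as the rows of $\mV^1$ and $\mV^2$, applies $\mM$ to each of their $rd_v + r$ columns, and then contracts with $\phi(\mathbf q_i)$. Your augmented value $\bar v_j$ simply merges these numerator and denominator products into a single batch of $rp$ columns with the same count.
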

\begin{proof}
    Note that the $i$'th token representation obtained from the general masked kernel attention has the form 
    \[\mathsf{Att}_K(\mQ, \mK, \mV,\mM)_i= \frac{\phi(\mathbf q_i^\top)^\top \sum_{j=1}^T \mM_{i,j} \phi(\mathbf k_j^\top)^\top v_j}{\phi(\mathbf q_i^\top)^\top \sum_{j=1}^T \mM_{i,j}\phi(\mathbf k_j^\top)}.\]
Then, following \cite{luo2021stable}, let \[\mathbf H^1 = \left(\sum_{j=1}^T \mM_{i,j} \phi(\mathbf k_j^\top) \mathbf v_j\right)_{i=1}^T \in \R^{r\times d_v}\] and \[\mathbf H^2 = \left(\sum_{j=1}^T \mM_{i,j}\phi(\mathbf k_j^\top)^\top\right)_{i=1}^T \in \R^{1\times r}.\] Note that if $\mathbf D^1$ and $\mD^2$ are the vectorized forms of $\mH^1$ and $\mH^2$ (respectively), where each element of the sequence is vectorized and the resulting vectors are stacked into matrices, then $\mD^1 = \mM \mV^1$ and $\mD^2 = \mM \mV^2$, where the $i$'th rows of $\mV^1$ and $\mV^2$ are given as $\mV^1_i = \mathrm{vec}(\phi(\mathbf k_i)^\top \mathbf v_i)$ and $\mV_i^2 = \phi(\mathbf k_i^\top)^\top$. Therefore, computing $\mH^1$ and $\mH^2$ takes time $f_\mM(T)rd_v$, and so $\mathsf{Att}_i$ can be computed in time $O((f_\mM(T) + T)rd_v)$, which completes the proof.\qedhere\\
\end{proof}

\section{VC dimension of the causal incidence masks}
\label{sec: vc dimension proofs}
\begin{lemma}[Lower triangular matrices]
    The $T\times T$ binary lower triangular matrix $\mL_T$ has VC-dimension $0$ for $T=1$ and $1$ for $T\geq 2$.\label{lemma: L_T has vc dimension 1}
\end{lemma}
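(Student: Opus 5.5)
We prove the claim by computing the set system $\mathcal S(\mL_T)$ directly and checking the two defining conditions of VC dimension: exhibit a shattered set of the claimed size, then rule out any larger one. By definition, row $i$ of $\mL_T$ has support $S_i=\{1,\dots,i\}$, so $\mathcal S(\mL_T)=\{[1],[2],\dots,[T]\}$ is a chain of nested prefixes. We use the convention that the empty set is shattered iff the system is nonempty. It is, since $T\ge1$, so the VC dimension is always at least $0$.

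\textbf{Case $T=1$.} The only row support is $\{1\}$. The singleton $\{1\}$ would need both the trace $\emptyset$ and the trace $\{1\}$. The only available set $\{1\}$ has trace $\{1\}$, so $\emptyset$ is never realized. Hence no singleton is shattered and $\mathrm{VC}(\mL_1)=0$.

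\textbf{Case $T\ge2$, lower bound.} Take the singleton $\{2\}$. Row $1$ gives trace $S_1\cap\{2\}=\emptyset$, and row $2$ gives $S_2\cap\{2\}=\{2\}$. Both subsets are realized, so $\{2\}$ is shattered and $\mathrm{VC}\ge1$.

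\textbf{Case $T\ge2$, upper bound.} Take any two keys $j<k$ and suppose $\{j,k\}$ were shattered. Then some row $i$ would have trace $\{k\}$, so $k\le i$. Since $j<k$, this forces $j\le i$, so $j\in S_i$ as well, a contradiction. Thus no two-element set is shattered, and therefore no larger set is either, since every subset of a shattered set is shattered. This gives $\mathrm{VC}(\mL_T)=1$.

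There is no real obstacle here. The only point needing care is the degenerate case $T=1$: the empty trace is unavailable there because the sole row contains key $1$. This is exactly why the lower bound needs a second row, and hence $T\ge2$. The same chain argument will later be reused inside the proof of Theorem~\ref{thm:props}(ii)–(iii) for the diagonal causal blocks $\mL_n$ and $\mL_m$.
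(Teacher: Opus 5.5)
Your proof is correct and follows the paper's argument essentially step for step: the $T=1$ case via the single support $\{1\}$, the lower bound by shattering $\{2\}$ with rows $1$ and $2$, and the upper bound by noting that nested prefixes can never realize the trace $\{k\}$ on a pair $j<k$. Your version is somewhat more explicit than the paper's, since it names the unrealizable trace and checks that the empty set is shattered, but the approach is the same.
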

\begin{proof}
For $T=1$, the only row support is $\{1\}$, so no singleton is shattered.

For $T\ge2$, note that column $2$ is excluded by row $1$ and included by
row $2$, and so the singleton $\{2\}$ is shattered, which proves $\operatorname{VC}(\mL_T)\ge1$.
Next, take any two distinct column/row indices $x, y$, with $x<y$. In $\mL_T$, the upper-right entries are all $0$'s. So, if a row indicator functions is $1$ at a later index, we cannot have independent labelings. Therefore, since no configuration of two points can achieve all $2^2=4$ binary label combinations, the VC-dimension is less than $2$, completing the proof.\qedhere\\
\end{proof} 

\begin{lemma}[Affine hyperplanes]
\label{lem:affine-hyperplane-vc}
Let $D\ge1$ and let $q$ be a prime power. Then, the set system of all proper affine hyperplanes in $\mathbb F_q^D$ has VC dimension exactly $D$.
\end{lemma}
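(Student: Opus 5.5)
The proof splits into a lower bound $\ge D$, by exhibiting a shattered set of size $D$, and an upper bound $\le D$, by showing that no $D+1$ points are shattered. A hyperplane here means a set $\{x : a^\top x = b\}$ with $a\neq 0$, so every hyperplane is a proper affine subspace of dimension $D-1$.

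\textbf{Lower bound.} I will use the standard basis $e_1,\dots,e_D$. Given $R\subseteq[D]$, I need a hyperplane containing exactly $\{e_\ell:\ell\in R\}$ among these points.
\begin{itemize}[leftmargin=*]
\item If $R\neq\emptyset$, take $a=\sum_{\ell\in R} e_\ell$ and $b=1$. Then $a^\top e_\ell = 1$ exactly when $\ell\in R$ and $0$ otherwise, as required.
\item If $R=\emptyset$, take $a=e_1$ and $b=0$ when $D\ge2$. This contains $e_2$, which is not allowed, so I fix it by instead taking $a=\sum_\ell e_\ell$ and $b=0$. Then $a^\top e_\ell=1\neq0$ for every $\ell$, so no basis point lies on the hyperplane.
\end{itemize}
(One could also pick $b\notin\{0,1\}$ when $q>2$, but the choice $b=0$ works for every $q$, including $q=2$, and even for $D=1$.) Hence $\{e_1,\dots,e_D\}$ is shattered, and the VC dimension is at least $D$.

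\textbf{Upper bound.} Take any $D+1$ distinct points $x_0,\dots,x_D$; I must show some subset is not cut out by a hyperplane. The plan is to split on whether they are affinely independent.
\begin{itemize}[leftmargin=*]
\item \emph{Affinely dependent:} their affine hull has dimension at most $D-1$, so some point, say $x_D$, lies in the affine hull of a set of the others that spans that hull. A hyperplane is an affine subspace, hence closed under affine combinations, so any hyperplane containing those others also contains $x_D$. That subset therefore cannot be picked out without $x_D$. Concretely, I will choose a minimal affinely dependent subset $\{x_0,\dots,x_k\}$ and express $x_k$ as an affine combination of $x_0,\dots,x_{k-1}$; then the label pattern with $x_0,\dots,x_{k-1}$ included and $x_k$ excluded is not realized.
\item \emph{Affinely independent:} their affine hull is all of $\mathbb F_q^D$. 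Then no hyperplane contains all $D+1$ points, since a hyperplane containing them would contain their affine hull, which is the whole space, contradicting properness. So the all-ones labeling is not realized.
\end{itemize}
In both cases the $D+1$ points are not shattered.

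The main subtlety, though it is not deep, is making the dependence argument valid over $\mathbb F_q$. Affine hulls and affine combinations with coefficients summing to $1$ behave exactly as over the reals, because only field axioms are used, so no characteristic issue arises. Finally, when $D+1$ exceeds $q^D$ there are not enough points in the space, and the bound is vacuous.
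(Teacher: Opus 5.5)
Your proof is correct and takes essentially the same route as the paper. The lower bound shatters the standard basis with explicit hyperplanes, and the upper bound uses affine dependence together with closure of hyperplanes under affine combinations; the paper merges your two cases by noting that the all-included trace already forces affine dependence, and your witnesses $\{x:\sum_{\ell\in R}x_\ell=1\}$ and $\{x:\sum_\ell x_\ell=0\}$ differ only cosmetically from the paper's $\{x:\sum_{\ell\notin R}x_\ell=0\}$ and $\{x:\sum_\ell x_\ell=1\}$, with both choices valid over every $\mathbb F_q$.
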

\begin{proof}
We first record an affine-dependence observation. Suppose profiles
$x_j\in\mathbb F_q^D$, indexed by a set $C$, satisfy
\[
\sum_{j\in C}\lambda_jx_j=0,
\qquad
\sum_{j\in C}\lambda_j=0,
\qquad
\lambda_j\ne0\quad(j\in C).
\tag{*}
\]
For every $a\in C$, any affine hyperplane containing all $x_j$ with
$j\in C\setminus\{a\}$ also contains $x_a$: indeed,
\[
x_a=-\sum_{j\in C\setminus\{a\}}
\frac{\lambda_j}{\lambda_a}x_j,
\qquad
-\sum_{j\in C\setminus\{a\}}\frac{\lambda_j}{\lambda_a}=1.
\]
This observation also applies to repeated profiles at distinct indices.

If $D+1$ points were shattered, the all-included trace would place
them in a proper hyperplane of affine dimension $D-1$. They are
therefore affinely dependent. Restricting a nonzero dependence to its
nonzero coefficients gives a set $C$ satisfying $(*)$. The preceding
observation rules out the trace $C\setminus\{a\}$, contradicting
shattering. This proves the upper bound.

For the lower bound, let $e_1,\ldots,e_D$ be the standard basis.
For each $\mR\subseteq[D]$, define
\begin{equation}
H_R=
\begin{cases}
\displaystyle\left\{x\in\mathbb F_q^D:
\sum_{\ell\notin R}x_\ell=0\right\}, & R\ne[D],\\[4pt]
\displaystyle\left\{x\in\mathbb F_q^D:
\sum_{\ell=1}^D x_\ell=1\right\}, & R=[D].
\end{cases}
\label{eq:vc-witness-hyperplanes}
\end{equation}
Each defining normal is nonzero, so every $H_R$ is a proper affine
hyperplane. Moreover, $e_\ell\in H_R$ if and only if $\ell\in R$.
Thus the standard basis is shattered. The construction is valid
over every finite field, including $\mathbb F_2$, which completes the proof.\qedhere\\
\end{proof}

\propertiesofsmatmasks*

\begin{proof}Let $\mM^{(d)}$ be the mask constructed in Section \ref{subsec: mask constructions}. Let $q$ be a prime. Then, for every $1\leq d < \lfloor \log_2 T\rfloor$ for sufficiently large $T$, we prove the above properties.

Recall the block form from Section \ref{subsec: mask constructions}:
\[
\mM^{(d)}=
\begin{pmatrix}
\mL_n&0\\
\mG&\mL_m
\end{pmatrix},
\qquad
\mG_{ij}=\mathbf1\{\operatorname{prof}(j)
\in H_{\operatorname{type}(i)}\}.
\]
The two diagonal blocks are inclusive causal triangles,
and every entry of $\mG$ connects a recent query to a distant
key. This proves causality, the unit diagonal, and
$\mM^{(d)}\le \mL_T$.

    We prove (iii) and (iv) for every $d$. First, for $d=1$, the mask is $\mL_T$ which clearly satisfies (i) and (ii) and (iv). Then, from \cref{lemma: L_T has vc dimension 1}, (iii) is satisfied.  Moreover, note that
\[
\mathrm{nnz}(\mM^{(d)})
\ge\frac{n(n+1)+m(m+1)}2
\ge\frac{(n+m)^2}{4}
=\frac{T^2}{4},
\]
while causality gives
$\operatorname{nnz}(\mM^{(d)})\le T(T+1)/2$.

Finally, it remains to prove the VC-dimension claim for $d\ge2$. For this, let $D=d-1$.

\textbf{Upper bound.} Suppose a set $J$ of $d+1$ column indices is shattered. Split it into distant and recent indices,
$J=J_{\mathrm{dist}}\sqcup J_{\mathrm{rec}}$.
The traces on the recent block are empty for distant rows
and prefixes for recent rows. In particular, if $u<v$
are recent indices, every row containing $v$ also contains
$u$. Thus $|J_{\mathrm{rec}}|\le1$. We proceed by casework:

If $|J_{\mathrm{rec}}|=1$, consider the labelings in which
that recent coordinate is one. Their realizing rows must
be recent rows, whose distant supports are $\{j\in[n]:\operatorname{prof}(j)
\in H_{\operatorname{type}(i)}\}$. Therefore, the $d=D+1$ distant indices would be shattered by proper affine hyperplanes in $\mathbb F_q^D$. If two of these indices share a profile, they cannot be independently labeled. Otherwise, their profiles would
form a shattered set of $D+1$ points, contradicting
Lemma \ref{lem:affine-hyperplane-vc}. Hence this case is impossible.

On the other hand, if $|J_{\mathrm{rec}}|=0$, all $d+1=D+2$ indices are
distant. Their profiles are affinely dependent, so there
exist a set $C\subseteq J$ and coefficients satisfying
\[
\sum_{j\in C}\lambda_j\operatorname{prof}(j)=0,
\qquad
\sum_{j\in C}\lambda_j=0,
\qquad
\lambda_j\ne0\quad(j\in C),
\]
where $|C|\ge2$. By the affine-dependence observation
in Lemma \ref{lem:affine-hyperplane-vc}, for every $a\in C$, any affine hyperplane containing the profiles indexed by $C\setminus\{a\}$ also contains $\operatorname{prof}(a)$.
So, no recent row realizes the trace
$C\setminus\{a\}$ on $C$.

Since shattering $J$ implies shattering $C$, all the
traces $C\setminus\{a\}$ would have to be realized
by distant rows. But distant row supports are nested
prefixes, whereas the $|C|\ge2$ sets $\bigl\{C\setminus\{a\}:a\in C\bigr\}$ are pairwise incomparable, and a chain cannot realize all
of them. Therefore, this contradiction proves $\operatorname{VC}(\mM^{(d)})\le d$.

\textbf{Lower bound.}
We now use the prescribed assignment conditions from
Section \ref{subsec: mask constructions}. These provide distinct distant positions
$j_1,\ldots,j_D$ with $\operatorname{prof}(j_\ell)=e_\ell$ for $\ell\in[D]$, and a recent index $\tau\in[m]$ such that each witness
hyperplane $H_R$, $\mR\subseteq[D]$, occurs at recent
query indices $i_R^-,i_R^+$ satisfying
\[
i_R^-<\tau\le i_R^+,
\qquad
H_{\operatorname{type}(i_R^-)}
=H_{\operatorname{type}(i_R^+)}=H_R.
\]
By the definition of these witness hyperplanes,
$e_\ell\in H_R$ if and only if $\ell\in R$.

Consider the $d$ column indices $J_*=\{j_1,\ldots,j_D,n+\tau\}$. Fix $A\subseteq J_*$ and set
$\mR=\{\ell:j_\ell\in A\}$.
If $n+\tau\notin A$, use row $n+i_R^-$;
if $n+\tau\in A$, use row $n+i_R^+$.
In both cases, the incidence condition gives exactly
the required trace on the distant landmarks.
The recent causal triangle includes column $n+\tau$
precisely when the recent query index is at least $\tau$,
so the final coordinate also has its required label.
Thus the selected row has trace exactly $A$ on $J_*$.

Therefore, every subset of $J_*$ is realized, giving
$\operatorname{VC}(\mM^{(d)})\ge d$.
Together with the upper bound, this proves
$\operatorname{VC}(\mM^{(d)})=d$.\qedhere
\end{proof}

\textbf{VC dimension of learned multi-read routing.}
Fix a sequence and head, and let $D=d-1\ge1$.
With one profile per distant key and $k$ positively
weighted reads, each cross-block support is a union
of $k$ affine-hyperplane traces. Writing $B_k$ for
this binary cross-block mask and
\[
  \widehat M_k=
  \begin{pmatrix}
    L_n&0\\ B_k&L_m
  \end{pmatrix},
\]
we have, for an absolute constant $C$,
\[
  \operatorname{VC}(\widehat M_k)
  \le C(d-1)k\log_2(2k)+2.
\]
In particular,
$\operatorname{VC}(\widehat M_4)\le25d-24=O(d)$.

\begin{proof}
Affine hyperplanes in $\mathbb{F}_q^D$ have VC
dimension $D$. By Sauer-Shelah's lemma, the number of
cross-block traces on any $s\ge D$ keys satisfies
\[
  \Pi_{B_k}(s)
  \le
  \left(\sum_{r=0}^{D}\binom{s}{r}\right)^k
  \le
  \left(\frac{es}{D}\right)^{Dk}.
\]
Thus, shattering requires $2^s\le(es/D)^{Dk}$.
Setting $u=s/(Dk)$ gives $2^u\le eku$, hence
$u=O(\log_2(2k))$ and
\[
  \operatorname{VC}(B_k)
  =O\!\left(Dk\log_2(2k)\right).
\]
For $k=4$, setting $u=s/D$ instead yields
$2^u\le(eu)^4$, which implies $u<25$.
Consequently, $\operatorname{VC}(B_4)\le25D-1$.

Finally, let $v=\operatorname{VC}(B_k)$.
A shattered column set contains at most one recent
column, since recent-column supports are nested
prefixes. If it contains one, fixing that column
to one forces recent rows to shatter all selected
distant columns, giving size at most $v+1$.
If all columns are distant, fixing the earliest
to zero and the latest to one excludes every
distant prefix row, so recent rows must shatter
the remaining columns. Thus
$\operatorname{VC}(\widehat M_k)\le v+2$,
proving both claims.
\end{proof}

\textbf{Dependence on the construction.} The bounds above characterize our family under its prescribed field-size scaling. Other choices of $q$ can preserve the same mask VC dimension while changing the number of profile cells, the number of tokens sharing each profile, and the computational cost. In particular, fixed $q$ gives a fixed number of profiles as $T$ grows. Our scaling instead lets the profile and type tables grow with sequence length.

\section{Chunking Proofs}
\label{sec:chunking_proofs}

We now provide the proof for our result in Theorem \ref{thm:exact}.

\chunkwiseSmatAttention*

\begin{proof}
\textbf{Forward Pass.} Before the final division, $\bar y_i$ is linear in the mask, and the two sub-masks of $\mM^{(d)}$ have disjoint support. Thus, it suffices to evaluate each branch separately, add the outputs, and then normalize.

For the long-range branch, a recent token $n+i$ receives
$\phi_{n+i}^\top\sum_{j\le n}\mathbf G_{ij}\mZ_j$. Substituting
$\mathbf G_{ij}=\mathbf C_{\operatorname{type}(i),\operatorname{prof}(j)}$ and
grouping the sum by profile,
\[
  \sum_{j\le n}\mathbf G_{ij}\mZ_j
  =\sum_{x\in\mathbb F_q^{d-1}}\mathbf C_{\operatorname{type}(i),x}
   \!\!\!\sum_{j\le n:\,\operatorname{prof}(j)=x}\!\!\!\mZ_j
  =\sum_x\mathbf C_{\operatorname{type}(i),x}F_x
  =U_{\operatorname{type}(i)},
\]
which is the entry of the type table computed by the algorithm. The
regrouping is valid because $\operatorname{prof}$ partitions
$[n]$ into disjoint sets. Furthermore, because this long-range branch only applies to recent queries, the term evaluates to zero for all distant queries
$\bar y^{\,\mathrm{lr}}_i=0$ for $i\le n$.

To be more explicit, each augmented state flattens into a row, so $\mZ\in\R^{n\times rp}$. Recall the one-hot matrices of \eqref{eq:C}: $\mE\in\{0,1\}^{n\times q^{d-1}}$ with $\mE_{jx}=\mathbbm 1\{\operatorname{prof}(j)=x\}$, and $\mR\in\{0,1\}^{m\times B}$ with $\mR_{ih}=\mathbbm 1\{\operatorname{type}(i)=h\}$, so that $\mathbf G=\mR\,\mathbf C\,\mE^\top$. The long-range states of all recent queries are the rows of
  \begin{equation}
    \mathbf G\,\mZ
    \;=\;\mR\,\mathbf C\,\mE^\top \mZ
    \;=\;\underbrace{\mR\,\big(\,\underbrace{\mathbf C\,\big(\,
          \underbrace{\mE^\top \mZ}_{\textstyle \mF\in\R^{q^{d-1}\times rp}}\,
          \big)}_{\textstyle U\in\R^{B\times rp}}\,\big)}_{\textstyle \in\R^{m\times rp}},
    \label{eq:assoc}
  \end{equation}
and Algorithm~\ref{alg:prefill} evaluates \eqref{eq:assoc} from the right: $\mE^\top Z$ is the pooling step, $\mathbf C(\cdot)$ the tabulation step, and the outer $\mR$ the per-type read, which is a row gather whose inverse permutation is the scatter back into chronological order.

For the causal branch, fix $i\in I_b$. Because $n$ is chunk-aligned, $I_b$ lies
entirely in $[n]$ or entirely in $(n,T]$. The incoming state collects the keys
of all earlier chunks in that block and the within-chunk term collects
$\{j\in I_b:j\le i\}$, so their union is $\{j\le i\}$ when $i\le n$ and
$\{n<j\le i\}$ when $i>n$. These are the row supports of $\mathbf L_n$
and $\mathbf L_m$. Summing the branches and dividing once gives the output for $\mathbf M^{(d)}$.

For cost, the three factors of \eqref{eq:assoc} correspond to three counts, each obtained from the last by collapsing one index, each worth a factor of $\Theta(T^{1/d})$. Note, $m,n=\Theta(T)$, $B=\Theta(q^{d-1})$ and $q=\Theta(T^{1/d})$.

From $\mathbf G$ itself, we have a hyperplane of $\mathbb F_q^{d-1}$ containing $q^{d-2}$ of the $q^{d-1}$ points (a $1/q$ of the entries are $1$s), so a recent query is incident to $\Theta(n/q)$ distant keys and
  \begin{equation}
    \operatorname{nnz}(\mathbf G)=\Theta\big(mn/q\big)=\Theta\big(T^{2-1/d}\big),
    \label{eq:nnzG}
  \end{equation}
  Applying $\mR$ collapses the queries -- row $i$ of $\mathbf G$ depends on $i$ only through
  $\operatorname{type}(i)$ -- so the $m$ rows take only $B$ distinct values, and the $(\text{type},\text{token})$ incidences number 
  \[\Theta\big(Bn/q\big)=\Theta\big(nq^{d-2}\big)=\Theta(T^{2-2/d}),\] 
  which produces an additional saving of $m/B=\Theta(T^{1/d})$. Applying $E^\top$ collapses the keys in the same way: column $j$ depends on $j$ only through $\operatorname{prof}(j)$, so the $n$ columns take only $q^{d-1}$ distinct values, and thus the $(\text{type},\text{profile})$ incidences are the nonzeros of $\mathbf C$,
  \begin{equation}
    \operatorname{nnz}(\mathbf C)=\Theta\big(Bq^{d-2}\big)
    =\Theta\big(q^{2d-3}\big)=\Theta\big(T^{2-3/d}\big),
    \label{eq:nnzC}
  \end{equation}
  a further saving of $n/q^{d-1}=\Theta(T^{1/d})$. Thus, tabulation performs one state addition of size $rp$ per nonzero of $\mathbf C$, for $O(T^{2-3/d}rp)$.

  The remaining steps are linear in $T$. Pooling reads each distant token once and adds it into one bucket; the per-type contractions cost $rp$ per recent query, for $\sum_h m_h\,rp=m\,rp$ in total, independent of $B$; the chunk summaries and the scan touch each token and each of the $T/c$ chunk boundaries a constant number of times. Together these are $O(Trp)$. Within a chunk, calculating the local attention scores costs $O(c^2r)$ and applying them to the augmented values costs $O(c^2p)$; summing over $T/c$ chunks gives $O(Tc(r+p))$. Together, we have the stated bound.

\textbf{Backward Pass.} For the Long-range branch, let the three steps of \eqref{eq:schedule} be
\[
  \begin{aligned}
    \mathcal P:\mZ\mapsto F,&\qquad
      F_x=\!\!\!\sum_{j\le n:\,\operatorname{prof}(j)=x}\!\!\!\mZ_j,\\[2pt]
    \mathcal C:F\mapsto U,&\qquad U_h=\sum_x\mathbf C_{hx}F_x,\\[2pt]
    \mathcal R_\Phi:U\mapsto\bar y^{\mathrm{lr}},&\qquad
      \bar y^{\mathrm{lr}}_{n+i}=\phi_{n+i}^\top U_{\operatorname{type}(i)},
  \end{aligned}
\]
so that $\bar y^{\mathrm{lr}}=\mathcal R_\Phi\,\mathcal C\,\mathcal P\, \mathbf Z$. 

Let
\[
\bar{g}_i := \frac{\partial \mathcal{L}}{\partial \bar{y}_i}
\in \mathbb{R}^p
\]
denote the gradient of the loss with respect to the augmented output. Pairing with $\bar g$ and moving one factor at a time across the inner product
gives
\begin{equation}
  \bar U_h=\!\!\!\sum_{i:\,\operatorname{type}(i)=h}\!\!\!\phi_{n+i}\,\bar g_{n+i}^\top,
  \quad
  \bar F=\mathbf C^\top\bar U,
  \quad
  \bar \mZ_j=\bar F_{\operatorname{prof}(j)},
  \quad
  \bar\phi_{n+i}=U_{\operatorname{type}(i)}\,\bar g_{n+i}.
  \label{eq:lr-adjoint}
\end{equation}
The two grouped steps switch roles: the forward pass reduces over $\operatorname{prof}$ and broadcasts over $\operatorname{type}$, while \eqref{eq:lr-adjoint} reduces over
$\operatorname{type}$ and broadcasts over $\operatorname{prof}$. Each is one
pass over the tokens it owns at $O(rp)$ per token, so both cost $O(Trp)$.

For the middle step, the incidence structure is biregular, i.e. every hyperplane of $\mathbb F_q^{d-1}$ contains $q^{d-2}$ points, and also every point lies on $(q^{d-1}-1)/(q-1)=B/q$ hyperplanes (one per normalized direction $a$), since $x\in H_{a,b}$ forces $b=a^\top x$. Hence, $\operatorname{nnz}(\mathbf C^\top)=\operatorname{nnz}(\mathbf C)=\Theta(T^{2-3/d})$,
and $\mathbf C^\top$ has constant column degree just as $\mathbf C$ has constant row degree. The tabulation step therefore applies to $\mathbf C^\top$ with the point-to-hyperplane table in place of the hyperplane-to-point one, giving $O(T^{2-3/d}rp)$ for $\bar F$, matching the forward calculation.

For the causal branch, fix a chunk $I_b$ and stack its rows as $\Phi_b\in\R^{c\times r}$, $\Psi_b\in\R^{c\times r}$, $\bar V_b\in\R^{c\times p}$, and write $A^{(b)}:=\operatorname{tril}(\Phi_b\Psi_b^\top)$ for the within-chunk score tile, so that \eqref{eq:local} reads
$Y_b=\Phi_b S^{\mathrm{in}}_b+A^{(b)}\bar V_b$. Differentiating this at fixed $\operatorname{tril}$ pattern,
\begin{equation}
  \begin{aligned}
    \bar S^{\mathrm{in}}_b&=\Phi_b^\top\bar Y_b, &\qquad
    \bar{\bar V}_b&=\big(A^{(b)}\big)^\top\bar Y_b,\\[2pt]
    \bar\Phi_b&=\bar Y_b\big(S^{\mathrm{in}}_b\big)^\top
               +\operatorname{tril}\!\big(\bar Y_b\bar V_b^\top\big)\Psi_b,
    &\qquad
    \bar\Psi_b&=\operatorname{tril}\!\big(\bar Y_b\bar V_b^\top\big)^{\!\top}\Phi_b.
  \end{aligned}
  \label{eq:local-adjoint}
\end{equation}
Each is one $c\times c$ by $c\times r$ or $c\times p$ contraction, so
$O(c^2(r+p))$ per chunk and $O(Tc(r+p))$ over the $T/c$ chunks, the same as the
forward tile. Note that $A^{(b)}$ and $\operatorname{tril}(\bar Y_b\bar V_b^\top)$
are rebuilt from $\Phi_b,\Psi_b,\bar Y_b,\bar V_b$ when the chunk is visited, so
one $c^2$ tile is live at a time (same as in the forward pass).

For the scan, $S^{\mathrm{in}}_b=\sum_{a<b}D_a$ gives
\[
  \sum_b\big\langle\bar S^{\mathrm{in}}_b,\textstyle\sum_{a<b}D_a\big\rangle
  =\sum_a\Big\langle\textstyle\sum_{b>a}\bar S^{\mathrm{in}}_b,\;D_a\Big\rangle\]
  so $\bar D_a=\sum_{b>a}\bar S^{\mathrm{in}}_b$. In other words, 
the adjoint of an exclusive prefix scan is an exclusive suffix scan over the
same chunks, run separately within $[n]$ and within $(n,T]$, at the same $O(T/c)$
state additions. Broadcasting $\bar D_b$ back to the tokens of $I_b$ and adding
the two contributions of \eqref{eq:lr-adjoint} and \eqref{eq:local-adjoint}
accumulates $\bar \mZ_j$, from which $\bar\psi_j=\bar \mZ_j\bar v_j$ and
$\bar{\bar v}_j=\bar \mZ_j^\top\psi_j$ follow pointwise in $O(rp)$ per token.

Finally since $o_i=\bar y_{i,1:d_v}/\bar y_{i,p}$ with $\bar y_{i,p}>0$ by assumption, its Jacobian is row-local: with $\bar o_i:=\frac{\partial\mathcal L}{\partial o_i}$,
\[
  \bar g_{i,1:d_v}=\frac{\bar o_i}{\bar y_{i,p}},
  \qquad
  \bar g_{i,p}=-\frac{\bar o_i^\top\,\bar y_{i,1:d_v}}{\bar y_{i,p}^{2}}
             =-\frac{\bar o_i^\top o_i}{\bar y_{i,p}},
\]
at $O(p)$ per token and $O(Tp)$ overall, which is dominated.

Every step of Algorithm~\ref{alg:prefill} is therefore matched by an adjoint of the same shape and arithmetic. The backward pass runs in $O\big((T^{2-3/d}+T)rp+Tc(r+p)\big)$ work and
$O\big((T^{1-1/d}+T/c)rp+c^2\big)$ words, the bounds of \eqref{eq:cost}.\qedhere\\
\end{proof}

\begin{lemma}[Gated SMat]
\label{thm:gated}
Under \eqref{eq:gated} with scalar per-token gates, Algorithm~\ref{alg:prefill} computes the exact outputs within the complexity bound of \eqref{eq:cost}, using $O(T)$ additional space.
\end{lemma}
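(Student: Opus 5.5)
The idea is to factor the gate as a product of a row scaling and a column scaling, so the gated mask reduces to the binary mask acting on rescaled queries and keys. Because $\widetilde{\mathbf M}^{(d)}_{ij}=\mathbf M^{(d)}_{ij}e^{\lambda_i}e^{-\lambda_j}$, we have
\[
\widetilde{\mathbf M}^{(d)}=\mathsf{diag}(e^{\lambda})\,\mathbf M^{(d)}\,\mathsf{diag}(e^{-\lambda}).
\]
The augmented output $\bar y_i=\phi_i^\top\sum_j\widetilde{\mathbf M}^{(d)}_{ij}\mZ_j$ therefore equals $e^{\lambda_i}\phi_i^\top\sum_j\mathbf M^{(d)}_{ij}(e^{-\lambda_j}\mZ_j)$. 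We run Algorithm~\ref{alg:prefill} unchanged on $\phi_i'=e^{\lambda_i}\phi_i$ and $\mZ'_j=e^{-\lambda_j}\mZ_j$, or equivalently $\psi_j'=e^{-\lambda_j}\psi_j$, since the key gate $h_j$ already sits inside $\psi_j$. The rescaled features are still nonnegative, and $\bar y_{i,p}$ is unchanged, so the normalizer stays positive. Theorem~\ref{thm:exact} then gives exactness and the cost bound \eqref{eq:cost}. The only extra work is an $O(T)$ prefix sum for $\lambda$ and $O(T(r+p))$ rescaling, which is dominated. The extra space is the $T$ scalars $\lambda_i$.

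The main obstacle is numerical: $e^{-\lambda_j}$ can overflow because $\lambda$ decreases without bound. To keep the argument exact in a stable form, I would apply the factorization relative to anchors instead of globally.

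For the long-range branch, every edge runs from $j\le n$ to $n+i$. Writing $\lambda_{n+i}-\lambda_j=(\lambda_{n+i}-\lambda_n)+(\lambda_n-\lambda_j)$ splits the gate at the boundary. We pool $F_x=\sum_{\operatorname{prof}(j)=x}e^{\lambda_n-\lambda_j}\mZ_j$, whose factors are at most $1$, tabulate $U_h$ exactly as before, and read $e^{\lambda_{n+i}-\lambda_n}\phi_{n+i}^\top U_{\operatorname{type}(i)}$. This is the relative form of the same identity, because $\mathbf G$ is still $\mR\mathbf C\mE^\top$ and the gate factors as (query scalar)$\times$(key scalar).

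For the causal branch, this is the standard chunkwise gated linear attention argument. Chunk summaries are $D_b=\sum_{j\in I_b}e^{\lambda_{\mathrm{end}(b)}-\lambda_j}\mZ_j$, and the scan is the decayed recurrence $S^{\mathrm{in}}_{b+1}=e^{\lambda_{\mathrm{end}(b)}-\lambda_{\mathrm{end}(b-1)}}S^{\mathrm{in}}_b+D_b$. The within-chunk tile is $\operatorname{tril}(\Phi_b\Psi_b^\top)\odot(e^{\lambda_i-\lambda_j})$, which costs $O(c^2)$ extra per chunk. All of these factors are at most $1$.

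Correctness of each branch follows from the same support argument as in Theorem~\ref{thm:exact}, because the row supports are unchanged and only the edge weights change multiplicatively. For the backward pass, the adjoints are those of Theorem~\ref{thm:exact} composed with the diagonal rescalings. If the gates are learned, the gradient with respect to $\lambda$ can be accumulated as $\bar\lambda_i=\langle\bar\phi'_i,\phi'_i\rangle-\langle\bar\psi'_i,\psi'_i\rangle$, adjusted for the anchors, followed by a reverse cumulative sum to obtain gradients for $\log a_k$. This costs $O(T(r+p))$ time and $O(T)$ space, so the bound \eqref{eq:cost} is preserved.
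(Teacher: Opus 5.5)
Your proposal is correct, and its numerically stable version is essentially the paper's proof. The paper also computes $\lambda$ with one prefix sum. For the long-range branch it splits the gate at the boundary, $\exp(\lambda_{n+i}-\lambda_j)=\exp(\lambda_{n+i}-\lambda_n)\exp(\lambda_n-\lambda_j)$, folds the key-side scalar into $\mZ_j$ before pooling, and lets the query-side scalar multiply the read of $U_{\operatorname{type}(i)}$. For the causal branch it uses decayed chunk summaries, the decayed scan $S^{\mathrm{in}}_{b+1}=\exp(\lambda_{e(b)}-\lambda_{e(b-1)})S^{\mathrm{in}}_b+D_b$, and an entrywise-gated within-chunk tile, observing that every factor lies in $(0,1]$.

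What you add first is the global identity $\widetilde{\mathbf M}^{(d)}=\mathsf{diag}(e^{\lambda})\,\mathbf M^{(d)}\,\mathsf{diag}(e^{-\lambda})$. In exact arithmetic this lets you apply Theorem~\ref{thm:exact} as a black box to rescaled nonnegative features, so the forward bound, and even the backward bound, follow without re-deriving either branch. The paper instead never forms $e^{\lambda_i}$ or $e^{-\lambda_j}$ separately, for overflow safety, and so argues branch by branch; your anchored version recovers exactly that.

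Your gate-gradient formula $\bar\lambda_i=\langle\bar\phi'_i,\phi'_i\rangle-\langle\bar\psi'_i,\psi'_i\rangle$, followed by a reverse cumulative sum, is also correct and goes beyond what the paper's proof covers. Both inner products are invariant under the diagonal rescaling; for example, the first equals $\langle\bar g_i,\bar y_i\rangle$. So the gradient can be read off the stable computation.

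One small omission is in your causal branch. Your $S^{\mathrm{in}}_b$ is referred to the last position $e(b-1)$ of chunk $b-1$. The inter-chunk read for $i\in I_b$ must therefore carry the query-side factor, giving $\exp(\lambda_i-\lambda_{e(b-1)})\,\phi_i^\top S^{\mathrm{in}}_b$, which the paper states explicitly.
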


Lemmas~\ref{thm:gated}--~\ref{lem:decode} concern additive kernel attention with optional
scalar decay; they do not cover the delta-rule recurrence
used in the GDN extension of Section~\ref{sec:recurrent-extensions}.

\begin{proof}
We first compute $\lambda_1,\dots,\lambda_T$ with one prefix sum, in $O(T)$ work and $O(T)$ space. Every factor used afterwards has the form $\exp(\lambda_i-\lambda_j)$ with $j\le i$. Since $\lambda$ is non-increasing, each factor lies in $(0,1]$, so we never form $\exp(\lambda_i)$ or $\exp(-\lambda_j)$ separately and no step can overflow.

\emph{Causal branch.} Let $e(b)$ be the last index of chunk $I_b$. Each entry of the within-chunk tile becomes $(\phi_i^\top\psi_j)\exp(\lambda_i-\lambda_j)$, which is one entrywise product on a tile that is already materialized. The chunk summary becomes $D_b=\sum_{j\in I_b}\exp(\lambda_{e(b)}-\lambda_j)\mZ_j$, which rescales each $\mZ_j$ by a scalar prior to the sum. The carried state is read as $\exp(\lambda_i-\lambda_{e(b-1)})\,\phi_i^\top S^{\mathrm{in}}_b$ and updated as $S^{\mathrm{in}}_{b+1}=\exp(\lambda_{e(b)}-\lambda_{e(b-1)})\,S^{\mathrm{in}}_b+D_b$, adding one scalar multiplication per chunk. The scan is otherwise unchanged.

\emph{Long-range branch.} For a distant key $j\le n$ and a recent query $n+i$, the factor splits at the boundary,
\[
\exp(\lambda_{n+i}-\lambda_j)
=\exp(\lambda_{n+i}-\lambda_n)\cdot\exp(\lambda_n-\lambda_j),
\]
into a query-side and a key-side scalar, each at $(0,1]$. The key-side scalar is folded into $\mZ_j$ before pooling, and the query-side scalar multiplies the query's read from the type table. Neither modifies $\mathbf C$, the profile table or the type table, so the long-range branch is the ungated computation applied to rescaled inputs.

The added work is $O(T)$ exponentials, $O(Tc)$ for the tile products and $O(Tr)$ to rescale features, all dominated by terms already in \eqref{eq:cost}. The added space is the $O(T)$ values of $\lambda$.
\end{proof}

Note, the same factorization gives streaming decoding. Let $a_t=\exp(\lambda_t-\lambda_{t-1})\in(0,1]$. At the boundary, cache the type-major states
\[
U_h^{(n)}=\sum_{j\le n} C_{h,\operatorname{prof}(j)}\,
\exp(\lambda_n-\lambda_j)\,\mZ_j ,
\]
which is the ungated cache with each $\mZ_j$ rescaled. For $t>n$, maintain the recent state $S_t=a_tS_{t-1}+\mZ_t$ and the scalar $g_t=a_tg_{t-1}$, starting from $S_n=0$ and $g_n=1$. By induction, $S_t=\sum_{n<j\le t}\exp(\lambda_t-\lambda_j)\mZ_j$ and  $g_t=\exp(\lambda_t-\lambda_n)$, so the augmented output is
\[
\bar y_t=\phi_t^\top\!\left[\,g_t\,U^{(n)}_{\operatorname{type}(t-n)}+S_t\right].
\]
Each token costs the same as as the ungated computation plus two scalar multiplications.

\section{Decoding and Horizon-Free Decoding}
\label{sec: decoding and horizon free decoding}
We first restate and prove Theorem \ref{thm:decode}.
\streamingdecoding*
\begin{proof}
The type table $\{U_h\}_{h<B}$ depends only on the distant
tokens. Since every distant position precedes every recent one, it is fixed
once the distant block has been consumed, and hence, the profile states $\{F_x\}$ can be discarded. By the decomposition in the proof of Theorem~\ref{thm:exact}, the augmented output at recent position $n+i$ is
\begin{equation}\bar y_{n+i}
  =\phi_{n+i}^\top\Big(U_{\operatorname{type}(i)}+\!\!\sum_{n<j\le n+i}\!\!\mZ_j\Big),
\end{equation}
and the second term is a single running state $S$ maintained by the in-place
update $S\leftarrow S+\mZ_{n+i}$. The index
$\operatorname{type}(i)=i\bmod B$ is one arithmetic operation. Forming
$\mZ_{n+i}$, updating $S$, and contracting
$\phi_{n+i}^\top(U_{\operatorname{type}(i)}+S)$ each cost $O(rp)$. The retained state is the $B$ tabulated states together with $S$. For the gated variant from Equation \ref{eq:gated}, the running state update becomes $S\leftarrow a_{n+i}S+\mZ_{n+i}$ and the read of $U_{\operatorname{type}(i)}$ carries the scalar $\exp(\lambda_{n+i}-\lambda_n)$, maintained by one addition per token.\qedhere
\end{proof}

\label{app:window}

The streaming-decoding theorem (Theorem~\ref{thm:decode}) assumes that the total length $T$ of the
token sequence is known in advance, since the distant/recent boundary $n$ and the field size $q$ are both functions of $T$ and together determine the structure of the mask. Hence, if more tokens were to be added beyond $T$, the layer degrades to linear attention. In this section we replace that assumption with a fixed training horizon $T_{\max}$ and a recent window that advances in steps of $c_{dc}$.

\begin{definition}[Stepped-window mask]
\label{def:window}
Fix a training horizon $T_{\max}$, a block length $c_{dc}$ dividing $T_{\max}$ and a multiple of the chunk
width $c$, and the geometry $(q,\mC,B)$ from the SMat construction of
Section~\ref{subsec: mask constructions} for $T=T_{\max}$. For $t\ge1$ let
$b(t)=\lfloor (t-1)/c_{dc}\rfloor$ be the block of position $t$ and
$n(t)=\max\{0,\,(b(t)-1)c_{dc}\}$ be its number of distant positions. For any $T$, including
$T>T_{\max}$,
\[
  \mM^{(\mathrm{step})}_{tj}=
  \begin{cases}
    1 & n(t)<j\le t \quad(\text{recent window}),\\
    \mC_{\operatorname{type}(t),\,\operatorname{prof}(j)} & j\le n(t) \quad(\text{distant, through }\mG),\\
    0 & j>t.
  \end{cases}
\]
\end{definition}

In the block form of Section~\ref{subsec: mask constructions} the boundary $n=c\lfloor T/2c\rfloor$ is a constant determined by $T$. Here, the boundary is the function $n(\cdot)$, of which the constant $n$ is the special case $n(t)\equiv n$. Since $n(t)$ depends only on $t$, the mask for a length-$t$ sequence is the leading $t\times t$ block of the mask for any longer sequence. Thus, training at $T_{\max}$ and decoding at any length uses one mask.

In Algorithm~\ref{alg:decode}, each fresh recurrent state is initialized
to zero, with update $R \leftarrow R + Z_t$ and readout
$\operatorname{read}(R,\phi_t)=\phi_t^\top R$.
For the scalar-gated variant, the update becomes
$R \leftarrow a_t R + Z_t$.

\begin{figure}[h]
\centering
\begin{tikzpicture}[x=0.55cm,y=0.55cm]
  \def\K{6}
  \foreach \b in {0,...,5} {
    \pgfmathsetmacro{\yt}{\K-\b}\pgfmathsetmacro{\yb}{\K-\b-1}
    \ifnum\b>1
      \fill[orange!35] (0,\yb) rectangle (2,\yt);
      \ifnum\b>2 \fill[green!25] (2,\yb) rectangle (\b,\yt); \fi
    \else
      \ifnum\b=1 \fill[green!25] (0,\yb) rectangle (1,\yt); \fi
    \fi
    \fill[green!25] (\b,\yt) -- (\b,\yb) -- (\b+1,\yb) -- cycle;
  }
  \draw[step=1,gray!40] (0,0) grid (6,6);
  \draw[thick] (0,0) rectangle (6,6);
  \draw[dashed,thick,blue!60!black] (4,0) -- (4,6);
  \draw[dashed,thick,blue!60!black] (0,2) -- (6,2);
  \node[above] at (3,6.15) {\small current $\mM^{(d)}$, boundary kept at $n=T_{\max}/2$};
  \node[blue!60!black,below] at (4,-0.1) {\scriptsize $T_{\max}$};
  \node[below] at (3,-0.85) {\scriptsize recent (causal) region grows with $t$};
  \begin{scope}[xshift=6.2cm]
  \foreach \b in {0,...,5} {
    \pgfmathsetmacro{\yt}{\K-\b}\pgfmathsetmacro{\yb}{\K-\b-1}
    \ifnum\b>1 \fill[orange!35] (0,\yb) rectangle (\b-1,\yt); \fi
    \ifnum\b>0 \fill[green!25] (\b-1,\yb) rectangle (\b,\yt); \fi
    \fill[green!25] (\b,\yt) -- (\b,\yb) -- (\b+1,\yb) -- cycle;
  }
  \draw[step=1,gray!40] (0,0) grid (6,6);
  \draw[thick] (0,0) rectangle (6,6);
  \draw[dashed,thick,blue!60!black] (4,0) -- (4,6);
  \draw[dashed,thick,blue!60!black] (0,2) -- (6,2);
  \node[above] at (3,6.15) {\small stepped window, $c_{dc}=T_{\max}/4$};
  \node[blue!60!black,below] at (4,-0.1) {\scriptsize $T_{\max}$};
  \node[below] at (3,-0.85) {\scriptsize recent region stays $c_{dc}$ to $2c_{dc}$};
  \end{scope}
  \begin{scope}[yshift=-1.9cm,xshift=1.1cm]
    \fill[green!25] (0,0) rectangle (0.55,0.55);
    \node[right] at (0.65,0.28) {\scriptsize causal recurrence};
    \fill[orange!35] (6.4,0) rectangle (6.95,0.55);
    \node[right] at (7.05,0.28) {\scriptsize incidence block $\mG$};
  \end{scope}
\end{tikzpicture}
\caption{Both masks drawn at the same length $T=1.5\,T_{\max}$, in blocks of $c_{dc}=T_{\max}/4$ tokens. \textbf{Left:} the fixed-$T$ mask once its boundary is kept at $n=T_{\max}/2$. The incidence block $\mG$ stops growing at two blocks and everything after the boundary falls to the recurrence. \textbf{Right:} the stepped window,
whose recent region stays between $c_{dc}$ and $2c_{dc}$ tokens at every length while $\mG$ keeps
absorbing the older blocks.}
\label{fig:masks_window}
\end{figure}

\begin{algorithm}[hbt!]
\caption{Stepped-window decoding for one attention head}
\label{alg:decode}
\begin{algorithmic}[1]
\REQUIRE geometry $(q,\mC,B)$ fixed from $T_{\max}$; block length $c_{dc}$; maps
$\operatorname{prof},\operatorname{type}$
\STATE $U\gets 0$;\quad $\Delta_{\mathrm{old}},\Delta_{\mathrm{cur}}\gets 0$;\quad
$R_{\mathrm{old}},R_{\mathrm{cur}}\gets$ fresh recurrent states
\FOR{$t=1,2,\dots$}
  \IF{$t>1$ \textbf{and} $b(t)\ne b(t-1)$}
    \STATE \label{ln:fold} \textbf{if} $b(t)\ge 2$ \textbf{then} $U\gets U+\mC\,\Delta_{\mathrm{old}}$
    \STATE \label{ln:shift} $\Delta_{\mathrm{old}} \gets\Delta_{\mathrm{cur}}$;\quad $\Delta_{\mathrm{cur}}\gets 0$;\quad
$R_{\mathrm{old}}\gets R_{\mathrm{cur}}$;\quad $R_{\mathrm{cur}}\gets$ fresh
  \ENDIF
  \STATE \label{ln:scatter} Compute $\phi_t$ and $\mZ_t=\psi_t\bar v_t^\top$; update
$R_{\mathrm{old}},R_{\mathrm{cur}}$; $\Delta_{\mathrm{cur}}[\operatorname{prof}(t)]\mathrel{+}=\mZ_t$.
  \STATE \label{ln:read} $\bar y_t\gets \phi_t^\top U_{\operatorname{type}(t)}+\operatorname{read}(R_{\mathrm{old}},\phi_t)$;\quad
$o_t\gets \bar y_{t,1:d_v}/\bar y_{t,p}$.
\ENDFOR
\end{algorithmic}
\end{algorithm}

\begin{lemma}[Horizon-free streaming decoding]
\label{lem:decode}
Assume the hypotheses of Theorem~\ref{thm:exact}, with the geometry fixed from $T_{\max}$ and
$c_{dc}=T_{\max}/4$. Then Algorithm~\ref{alg:decode} computes the outputs of the stepped-window
mask of Definition~\ref{def:window}, for every length $T$ at
$O\big((1+T_{\max}^{1-3/d})rp\big)$ work per token from a cache of $O(T_{\max}^{1-1/d})$ states of size $r\times p$.
\end{lemma}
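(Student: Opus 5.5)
We argue correctness by induction on $t$, using loop invariants at the start of each block, and then bound the per-token work and the cache size separately. Correctness follows the linearity-in-the-mask argument from Theorem~\ref{thm:exact}: $\bar y_t$ splits into a distant term $\phi_t^\top\sum_{j\le n(t)}\mC_{\operatorname{type}(t),\operatorname{prof}(j)}\mZ_j$ and a recent term $\phi_t^\top\sum_{n(t)<j\le t}\mZ_j$. The final division is row-local, so it suffices that line~\ref{ln:read} produces these two sums.

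\textbf{Step 1: invariants.} We claim that, while processing a token $t$ in block $b=b(t)$ (after line~\ref{ln:scatter}), the following hold:
\begin{enumerate}[label=(\roman*)]
\item $\Delta_{\mathrm{cur}}$ is the profile table of block $b$ up to $t$, and $\Delta_{\mathrm{old}}$ is the full profile table of block $b-1$ (zero if $b=0$).
\item $U=\mC F^{\le n(t)}$, where $F^{\le s}_x=\sum_{j\le s,\operatorname{prof}(j)=x}\mZ_j$.
\item $R_{\mathrm{old}}$ holds $\sum_{(b-1)c_{dc}<j\le t}\mZ_j$, and $R_{\mathrm{cur}}$ holds the sum over block $b$ only.
\end{enumerate}
The base case covers blocks $0$ and $1$. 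There $n(t)=0$, no fold occurs, and $U=0$. In block $0$, $R_{\mathrm{old}}$ is a fresh state updated from $t=1$, so it holds the full prefix, which matches $n(t)=0$.

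At a block transition $b-1\to b$ with $b\ge2$, line~\ref{ln:fold} adds $\mC\Delta_{\mathrm{old}}$, where $\Delta_{\mathrm{old}}$ is the table of block $b-2$. Since $\mC$ is linear, $U$ becomes $\mC F^{\le (b-1)c_{dc}}=\mC F^{\le n(t)}$. The shift at line~\ref{ln:shift} then restores (i) and (iii). A fresh $R_{\mathrm{cur}}$ starts at the block boundary, and the old $R_{\mathrm{cur}}$ becomes $R_{\mathrm{old}}$, covering $((b-1)c_{dc},t]$.

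Plugging the invariants into line~\ref{ln:read} gives exactly the two sums required by Definition~\ref{def:window}. For the gated variant the same argument goes through with the scalar rescalings of Lemma~\ref{thm:gated}, but I would state it only for the additive case, as the lemma does. The index bookkeeping at the boundary is where off-by-one errors lurk. In particular, the fold must use block $b-2$, not $b-1$, because $n(t)=(b-1)c_{dc}$. I expect this to be the main place to check carefully, together with the convention that $R_{\mathrm{old}}$ in block $0$ coincides with $R_{\mathrm{cur}}$.

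\textbf{Step 2: cost.} Every token pays $O(rp)$ to form $\mZ_t$, update two recurrent states, scatter into $\Delta_{\mathrm{cur}}$, and read once from $U$. The fold at line~\ref{ln:fold} is an incidence multiplication. Run via Algorithm~\ref{alg:smat-incidence} with $q=\Theta(T_{\max}^{1/d})$, it costs $O(T_{\max}^{2-3/d}rp)$, by the $\operatorname{nnz}(\mC)$ count from Theorem~\ref{thm:exact}. The fold occurs once per $c_{dc}=T_{\max}/4$ tokens, so amortized it is $O(T_{\max}^{1-3/d}rp)$ per token.

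If a worst-case rather than amortized bound is intended, I would deamortize. The fold of $\Delta_{\mathrm{old}}$ into a shadow copy of $U$ is spread over the $c_{dc}$ steps of the following block. This is valid because $\Delta_{\mathrm{old}}$ is frozen during that block, and the shadow can be swapped in at the next boundary. Doing so requires double-buffering $U$ and keeping one more profile table.

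\textbf{Step 3: cache size.} The cache consists of $U$ with $B$ states, two or three profile tables with $q^{d-1}$ states each, and $O(1)$ recurrent states. Since $B$ and $q^{d-1}$ are both $\Theta(T_{\max}^{1-1/d})$, the cache is $O(T_{\max}^{1-1/d})$ states of size $r\times p$. This bound is independent of $T$ because the geometry is frozen at $T_{\max}$.
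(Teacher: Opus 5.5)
Your proposal is correct and follows the paper's proof essentially step for step. It uses the same three invariants ($U=\mC F^{(n(t))}$, the two per-profile deltas, and $R_{\mathrm{old}}$ started at $n(t)+1$) proved by induction on $t$, exactness from linearity of $\bar y_t$ in the mask row, the fold cost $\operatorname{nnz}(\mC)\,rp=\Theta(T_{\max}^{2-3/d})\,rp$ amortized over $c_{dc}=\Theta(T_{\max})$ tokens with a double-buffered deamortization, and an $O(q^{d-1})$-state cache. The differences are minor: the paper also sketches the gated variant, and your extra profile table is unnecessary, since $\Delta_{\mathrm{old}}$ stays frozen throughout the block over which its fold is spread.
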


\begin{proof}
Index blocks so that block $\beta$ occupies positions $\beta c_{dc}+1,\dots,(\beta+1)c_{dc}$, and write
$\beta(t)=b(t)$. For $0\le n\le T$ let
\[
  F^{(n)}_x:=\sum_{j\le n,\ \operatorname{prof}(j)=x}\mZ_j ,\qquad x\in\mathbb F_q^{d-1},
\]
be the profile table of the first $n$ positions, viewed as a $q^{d-1}\times rp$ matrix. Since
$n(t)=(\beta(t)-1)c_{dc}$ for $\beta(t)\ge1$, the distant region of a query in block $\beta$ is the union of blocks $0,\dots,\beta-2$.

We claim that for every $t\ge1$, by line~\ref{ln:read} in the algorithm,
\begin{enumerate}[label=(\roman*),leftmargin=2.2em,itemsep=0pt]
  \item $U=\mC F^{(n(t))}$;
  \item $\Delta_{\mathrm{old}}$ holds the per-profile sums of block $\beta(t)-1$, and $\Delta_{\mathrm{cur}}$
        those of block $\beta(t)$ restricted to positions $\le t$;
  \item $R_{\mathrm{old}}$ is the state of the recurrence initialized at position $n(t)+1$ and advanced
        through $t$.
\end{enumerate}
We prove this by induction on $t$. For $t=1$ we have $\beta(1)=0$ and $n(1)=0$, the conditional does not
fire, and $U=0=\mC F^{(0)}$, giving (i)--(iii). Assume the invariant at $t-1$ and consider $t$.

If $\beta(t)=\beta(t-1)$ then $n(t)=n(t-1)$ and lines~4--5 are skipped, so $U$ is unchanged and (i)
persists; line~\ref{ln:scatter} adds $\mZ_t$ to $\Delta_{\mathrm{cur}}[\operatorname{prof}(t)]$ and advances both
recurrences, which preserves (ii) and (iii).

If $\beta(t)\ne\beta(t-1)$ then $t=\beta c_{dc}+1$ and $t-1$ is the last position
of block $\beta-1$. For $\beta\le1$ we have $n(t)=0$, the guard on line~\ref{ln:fold} suppresses the update, and
(i) holds with $U=0$. For $\beta\ge2$, the inductive hypothesis at $t-1$ gives
$U=\mC F^{(n(t-1))}=\mC F^{((\beta-2)c_{dc})}$ and, by (ii), $\Delta_{\mathrm{old}}$ equal to the
per-profile sums of block $\beta-2$. Those sums are precisely
$F^{((\beta-1)c_{dc})}-F^{((\beta-2)c_{dc})}$, so line~\ref{ln:fold} yields
\[
  U \;=\; \mC F^{((\beta-2)c_{dc})}+\mC\Big(F^{((\beta-1)c_{dc})}-F^{((\beta-2)c_{dc})}\Big)
    \;=\; \mC F^{((\beta-1)c_{dc})} \;=\; \mC F^{(n(t))},
\]
which shows (i). Line~\ref{ln:shift} then sets $\Delta_{\mathrm{old}}$ to the sums of block $\beta-1$ and clears
$\Delta_{\mathrm{cur}}$, and line~\ref{ln:scatter} adds $\mZ_t$, giving (ii); and it reassigns
$R_{\mathrm{old}}\gets R_{\mathrm{cur}}$, where $R_{\mathrm{cur}}$ was initialized at the first position
of block $\beta-1$, namely $(\beta-1)c_{dc}+1=n(t)+1$, giving (iii).

\emph{Exactness.} The augmented output is linear in the mask row, so by
Definition~\ref{def:window},
\begin{equation}
  \bar y_t=\sum_{j}\mM^{(\mathrm{step})}_{tj}\,\phi_t^\top \mZ_j
  =\phi_t^\top\Big(\underbrace{\sum_{j\le n(t)}\mC_{\operatorname{type}(t),\operatorname{prof}(j)}\mZ_j}_{\text{distant}}
  +\underbrace{\sum_{n(t)<j\le t}\mZ_j}_{\text{recent}}\Big).
  \label{eq:window-split}
\end{equation}
Grouping the distant keys by profile gives
$\sum_{j\le n}\mC_{h,\operatorname{prof}(j)}\mZ_j=(\mC F^{(n)})_h$, so by invariant~(i) the first
term of \eqref{eq:window-split} equals $\phi_t^\top U_{\operatorname{type}(t)}$. The recurrence is
linear and its state is initialized to zero at $n(t)+1$, so by invariant~(iii) it holds
$\sum_{n(t)<j\le t}\mZ_j$ and the second term equals $\operatorname{read}(R_{\mathrm{old}},\phi_t)$.
Both are accumulated into $\bar y_t$ before the single division on line~\ref{ln:read}, so the normalized output
is exact.

\emph{Gating.} For the gated mask $\widetilde{\mM}_{tj}=\mM^{(\mathrm{step})}_{tj}\exp(\lambda_t-\lambda_j)$,
replace $F^{(n)}$ by the table referred to the last distant position,
$\widetilde F^{(n)}_x:=\sum_{j\le n,\ \operatorname{prof}(j)=x}\exp(\lambda_n-\lambda_j)\mZ_j$, and read
$U$ as $\exp(\lambda_t-\lambda_{n(t)})\,\phi_t^\top U_{\operatorname{type}(t)}$. Writing
$N=(\beta-2)c_{dc}$ and $N'=(\beta-1)c_{dc}$, the two tables satisfy
\[
  \widetilde F^{(N')}=\exp(\lambda_{N'}-\lambda_N)\,\widetilde F^{(N)}
  +\Big(\text{per-profile sums of block }\beta-2\text{ referred to }N'\Big),
\]
so invariant~(i) is restored with line~\ref{ln:fold} replaced by
$U\gets\exp(\lambda_{N'}-\lambda_N)U+\mC\,\Delta_{\mathrm{old}}$, one scalar multiply per block. The
induction and the exactness argument are otherwise unchanged.

\emph{Cost.} Per token, forming $\phi_t$ and $\mZ_t$, the two recurrent updates, the scatter-add into
$\Delta_{\mathrm{cur}}$, and the read and normalization on line~\ref{ln:read} are each $O(rp)$. The only
other work is line~\ref{ln:fold}, one sparse product costing $\operatorname{nnz}(\mC)\,rp$ scalar operations once
every $c_{dc}$ tokens. As written, line~\ref{ln:fold} performs the sparse product at the first token of each block, so the algorithm meets the bound below in amortized form. For a worst-case bound, the next type table $U+\mC\,\Delta_{\mathrm{old}}$ is built in a second buffer during the preceding block: the block it absorbs is complete for all $c_{dc}$ steps of that block, so copying $U$ ($B\,rp$ operations) and the sparse product ($\operatorname{nnz}(\mC)\,rp$) can be spread over those steps, and the buffers are swapped only once the new table is complete. Since $B\le\operatorname{nnz}(\mC)$, this adds $O\big((1+\operatorname{nnz}(\mC)/c_{dc})\,rp\big)$ work to each token. In the regular construction, the number of normalized directions is $|\mathcal A|=(q^{d-1}-1)/(q-1)=\Theta(q^{d-2})$ and each profile lies on one hyperplane per direction, so \[\operatorname{nnz}(\mC)=|\mathcal A|\,q^{d-1}=\Theta(q^{2d-3}).\]
With $T_{\max}=\Theta(q^{d})$ this is $\operatorname{nnz}(\mC)=\Theta(T_{\max}^{2-3/d})$, and since $c_{dc}=\Theta(T_{\max})$ the per-token share is $\Theta(T_{\max}^{1-3/d})$, giving
$O\big((1+T_{\max}^{1-3/d})rp\big)$, which is $O(rp)$ exactly when $d\le3$.

\emph{Cache.} The retained state is the type table $U$ ($B$ states), the two profile deltas
($q^{d-1}$ states each), the two recurrent states, and the second type table used above. Since
$B=\Theta(q^{d-1})=\Theta(T_{\max}^{1-1/d})$, the total is $ O(T_{\max}^{1-1/d})$ states of
size $r\times p$.
\end{proof}

\paragraph{Other dynamic boundary candidates.}
Write $n(t)$ for the boundary each schedule places at query $t$, and $T_{\mathrm{train}}$ for the
training length. We compare six boundary schedules. 
\begin{enumerate}
    \item \emph{No long-range branch} sets $n(t)=0$, so $\mG$ is never used and the layer is the ordinary recurrence, the $d=1$ SMat member.
    \item \emph{Boundary kept, current} is the mask of the main text with $n(t)=T_{\mathrm{train}}/2$ held where prefill placed it, which is what the fixed-horizon decoder does when it is run past its horizon.
    \item \emph{Rebuilt, current} is the same mask recomputed at the evaluation length, $n(t)=T/2$. It is computationally expensive and needs $T$ in advance.
    \item \emph{Doubling} places the boundary at $n(t)=2^{\lfloor\log_2 t\rfloor-1}$, so it doubles at each power of two. 
    \item The two \emph{window} arms are definition~\ref{def:window} with $c_{dc}=T_{\mathrm{train}}/4$ and $c_{dc}=T_{\mathrm{train}}/8$.
\end{enumerate}
All arms share architecture, data, optimizer and step budget. We demonstrate capabilities with the $d=2$ SMat mask: a
Mamba-2-style decayed linear recurrence reset at $n(t)$, plus the incidence branch read by a fixed random hash of token identifiers of Section~\ref{subsec: mask constructions}. Each result is the mean over three seeds, with the sample standard deviation as a subscript.

\paragraph{Task and evaluation layouts.}
Each MQAR sequence contains $64$ key--value pairs over a noise vocabulary. Every key occurs twice,
once beside its value and once as a query, and the model must emit that value at the query. During
training the pair and query positions are drawn uniformly over the sequence. At evaluation we use two
layouts. Under \emph{uniform} the positions are drawn as in training, so the result measures length
generalization alone. Under \emph{placed} all pairs are confined to a $256$-token region and all
queries to a $256$-token region at the very end of the sequence, separated by a gap of $D$ tokens;
$D$ controls how far back a query must reach, with large $D$ putting the pairs near the start of the
context. The placed layout moves the stored content across each schedule's boundary while holding everything else fixed.

 \paragraph{Length generalization.}
  Table~\ref{tab:mqar-uniform} evaluates each schedule out to eight times
  its training length under the
  training distribution.

  \begin{table}[t]
  \centering
  \small
  \begin{tabular}{lrrrr}
  \hline
  Schedule & 1024 & 2048 & 4096 & 8192 \\
  \hline
  No long-range branch & 13.3 (10.0) & 13.5 (10.6) & 12.8 (9.6) & 11.7 (9.4)
  \\
  Boundary kept, current & 91.5 (3.7) & 78.4 (5.3) & 63.6 (9.2) & 51.9
  (11.2) \\
  Rebuilt, current & 91.6 (4.0) & 87.2 (4.7) & 83.7 (6.1) & 81.8 (6.8) \\
  Doubling & 78.8 (4.0) & 68.7 (4.2) & 62.6 (1.8) & 53.9 (1.5) \\
  \rowcolor{gray!12}
  Window, $c_{dc}=T_{\max}/8$ & 93.8 (4.9) & 98.3 (1.4) & 95.6 (6.8) & 90.1
  (13.9) \\
  \rowcolor{gray!12}
  Window, $c_{dc}=T_{\max}/4$ & 82.2 (2.3) & 92.9 (1.0) & 95.9 (0.5) & 97.6
  (1.1) \\
  \hline
  \end{tabular}
  \caption{MQAR accuracy (\%) by evaluation length, \emph{uniform} layout,
  models trained at length 1024. The quarter-length window is the only schedule whose accuracy rises with the evaluation length, and the only one whose spread does not. Knowing $T$ in advance recovers part but not all of the fixed boundary's loss: the rebuilt oracle ends at 81.8 against 51.9 for the kept boundary. Mean (std) over 3 seeds.}
  \label{tab:mqar-uniform}
  \end{table}

  \paragraph{Recall relative to the boundary.}
  Table~\ref{tab:mqar-placed} fixes the evaluation length at $8192$ and
  sweeps the gap $D$. The final
  column, $D=7678$, places the pairs at the very start of the sequence,
  where every schedule routes
  them through $\mG$; it is a control rather than a hard case.

  \begin{table}[t]
  \centering
  \small
  \begin{tabular}{lrrrrr}
  \hline
  Schedule & $D=0$ & $D=256$ & $D=1024$ & $D=3072$ & $D=7678$ \\
  \hline
  No long-range branch & 8.7 (6.1) & 7.9 (5.7) & 8.3 (6.3) & 8.0 (6.4) & 7.0
  (5.8) \\
  Boundary kept, current & 35.4 (14.9) & 34.7 (15.7) & 34.1 (15.1) & 33.6
  (14.7) & 99.6 (0.3) \\
  Rebuilt, current & 36.1 (18.4) & 34.9 (18.0) & 35.9 (18.9) & 34.6 (18.8) &
  99.7 (0.3) \\
  Doubling & 22.4 (3.8) & 23.1 (2.4) & 23.0 (3.2) & 22.1 (3.1) & 85.6 (7.5)
  \\
  \rowcolor{gray!12}
  Window, $c_{dc}=T_{\max}/8$ & 73.4 (20.2) & 81.9 (23.0) & 81.7 (22.5) &
  81.9 (22.9) & 83.9 (20.1) \\
  \rowcolor{gray!12}
  Window, $c_{dc}=T_{\max}/4$ & 27.1 (3.0) & 94.9 (3.2) & 94.3 (3.3) & 94.7
  (3.0) & 97.5 (2.6) \\
  \hline
  \multicolumn{6}{l}{\emph{Fraction of query rows whose pair region is
  routed through $\mG$}} \\
  Fixed boundary / doubling & 0.00 & 0.00 & 0.00 & 0.00 & 1.00 \\
  \rowcolor{gray!12}
  Window, $c_{dc}=T_{\max}/4$ & 0.00 & 1.00 & 1.00 & 1.00 & 1.00 \\
  \hline
  \end{tabular}
  \caption{MQAR accuracy (\%) by gap $D$ between the pair region and the
  query region, \emph{placed}
  layout at evaluation length $8192$. The lower block reports how often the
  pairs actually reach the
  long-range branch. At $D=7678$ the pairs precede every boundary and the
  fixed schedules are the
  strongest arms. For $256\le D\le3072$ the pairs land inside the fixed
  boundary's recent region,
  which it can serve only from the fixed-size recurrence, and its accuracy
  falls to about 34\% while
  the quarter-length window holds about 95\%. At $D=0$, the quarter-length window obtains $27.1\%$, below the two fixed-boundary variants ($35.4\%$ and $36.1\%$). The eighth-length window instead obtains $73.4\%$, with substantial seed variation. Performance near the boundary therefore depends on the particular window schedule. Mean (std) over 3 seeds.}
  \label{tab:mqar-placed}
  \end{table}

  \paragraph{Language modelling.}
  Table~\ref{tab:pg19} reports byte-level PG-19, trained at length 2048 and
  evaluated on $16384$-byte
  windows.

  \begin{table}[t]
  \centering
  \small
  \begin{tabular}{lrrrr}
  \hline
  Schedule & $[0,1024)$ & $[2048,4096)$ & $[4096,8192)$ & $[8192,16384)$ \\
  \hline
  No long-range branch & 1.686 (0.006) & 1.675 (0.007) & 1.679 (0.008) &
  1.670 (0.007) \\
  Boundary kept, current & 1.634 (0.005) & 1.588 (0.005) & 1.598 (0.004) &
  1.595 (0.005) \\
  Rebuilt, current & 1.634 (0.005) & 1.614 (0.008) & 1.619 (0.006) & 1.601
  (0.004) \\
  Doubling & 1.646 (0.008) & 1.605 (0.008) & 1.610 (0.007) & 1.610 (0.008)
  \\
  \rowcolor{gray!12}
  Window, $c_{dc}=T_{\max}/8$ & 1.608 (0.003) & 1.546 (0.003) & 1.561
  (0.004) & 1.559 (0.004) \\
  \rowcolor{gray!12}
  Window, $c_{dc}=T_{\max}/4$ & 1.639 (0.008) & 1.584 (0.012) & 1.596
  (0.010) & 1.595 (0.014) \\
  \hline
  \end{tabular}
  \caption{PG-19 bits per byte by position within the evaluation window,
  trained at length 2048 and
  evaluated at $16384$; lower is better. Every schedule improves past the
  training length rather than
  degrading, and the quarter-length window is similar to the fixed boundary.
  Mean (std) over 3 seeds.}
  \label{tab:pg19}
  \end{table}

\section{Training Details and additional results}
\label{app:training}

All runs use an Adam-family optimizer, gradient-norm clipping at $1.0$, and a single GPU per run.
Learning-rate schedules are linear warmup followed by cosine decay to $0.1\times$ the peak, except on subset routing and multi-key subset recall, which train at a constant rate. Weight decay is applied to matrix parameters only; biases, norms, and the recurrent parameters $A_{\log}$, $\Delta_{\mathrm{bias}}$, and $D$ are excluded. Every arm within a table shares its data order, schedule, and precision.

\begin{table}[h]
\centering
\small
\setlength{\tabcolsep}{4pt}
\begin{tabular}{llllll}
\toprule
& \textbf{Routing} & \textbf{MQAR} & \textbf{MKAR} & \textbf{JCKR} & \textbf{PG-19} \\
\midrule
\multicolumn{6}{l}{\emph{Data}}\\
Sequence length      & 1024        & 64--256      & 1024        & 64--3076       & 16384 / 32768 \\
Vocabulary           & {--}        & 8192         & {--}        & 563            & 50257 (GPT-2) \\
Task size            & $k\le6$     & 4--64 pairs  & 8 pairs     & 4--512 records & {--} \\
\midrule
\multicolumn{6}{l}{\emph{Model}}\\
Layers               & 1           & 2            & 1           & 2              & 8 \\
Width $d_{\mathrm{model}}$ & 64    & 16/32/64     & 256         & 64             & 384 \\
Head dim.            & {--}        & 16           & 64          & 16             & 64 \\
State dim.           & {--}        & 16           & 64          & 16             & 64 \\
Short conv.\ width   & {--}        & 3            & 3           & 4              & 4 \\
\midrule
\multicolumn{6}{l}{\emph{Optimization}}\\
Optimizer            & Adam        & AdamW        & AdamW       & AdamW          & AdamW \\
Peak LR              & 3e-3        & 1e-2         & 3e-4        & 3e-3           & 6e-4 \\
Weight decay         & 0           & 0.1          & 0.1         & 0.1            & 0.1 \\
Batch size           & 64          & 256          & 32          & 256            & 2 / 1 \\
Budget               & 1500 steps  & 32 epochs    & 12000 steps & 32 epochs      & 9155 steps \\
Warmup               & 0           & {--}         & 0           & 0              & 300 \\
Precision            & fp32        & bf16         & bf16        & bf16           & bf16 \\
Seeds                & 3           & 3            & 3           & 3              & 1 \\
\bottomrule
\end{tabular}
\caption{Training configuration for each experiment. Routing trains with the mask
fixed and optimizes a mean-squared-error readout; MKAR uses binary cross-entropy on one sigmoid logit per pair (multi-label), and MQAR, JCKR and PG-19 use categorical (softmax) cross-entropy.
PG-19 batch sizes are per context length, chosen so that every step sees 32{,}768
tokens, giving 300M training tokens per arm.}
\label{tab:training}
\end{table}

\subsection{Extension Details}
\label{app:content-routing}

We describe one attention head and suppress the head index.
Let $h_t$ denote the layer input at position $t$, which depends
only on tokens at positions $1,\ldots,t$. The distant/recent
boundary $n$ and the finite geometry are fixed independently
of token content.

\textbf{Causal write addresses.}
The write-side representation is a short causal convolution,
\[
    c_t = \sum_{\ell=0}^{L-1} K_\ell h_{t-\ell},
\]
with zero padding before the sequence begins; our implementation
uses a depthwise convolution with $L=4$. This allows a write to
be addressed using nearby preceding content, such as a key
preceding its value. Write addresses are computed when tokens
are processed and are not revised using later queries. Let $D=d-1$. For each coordinate $k\in[D]$, we compute
\[
    z_{t,k}
    = \gamma_k
      \left\langle
        \frac{w_k}{\|w_k\|_2},
        \operatorname{LN}(c_t)
      \right\rangle + b_k,
    \] and
    \[s_{t,k}
    = \operatorname{clip}
      \bigl(q\Phi(z_{t,k}),\,0,\,q-\varepsilon\bigr),
\]
where $\operatorname{LN}$ normalizes each token independently,
$\Phi$ is the standard normal CDF, and $\varepsilon>0$ is a
small numerical constant. The discrete hash is
\[
    x_\theta(c_t)
    = \bigl(\lfloor s_{t,1}\rfloor,\ldots,
             \lfloor s_{t,D}\rfloor\bigr)
    \in \mathbb{F}_q^D.
\]
Thus, $\operatorname{prof}(j)=x_\theta(c_j)$ determines the
profile memory updated by distant token $j$. The projection
parameters $(w_k,\gamma_k,b_k)$ are learned jointly with the
backbone; the geometric incidence matrix $C$ remains fixed.

\textbf{Learning the discrete hash.}
The floor operation has no useful ordinary derivative. Therefore, following \cite{Jiang_2018}, we use a straight-through estimator: the forward
pass uses a discrete address, while the backward pass uses
a surrogate based on interpolation between neighboring bins.
Write
\[
    a_{t,k}=\lfloor s_{t,k}\rfloor,
    \qquad
    f_{t,k}=s_{t,k}-a_{t,k}.
\]
The corresponding interpolation assigns weights $1-f_{t,k}$
and $f_{t,k}$ to bins $a_{t,k}$ and
$\min(a_{t,k}+1,q-1)$, respectively. In the sparse four-read implementation, we retain only the
selected write route. Its straight-through weight is
\[
    \omega_t
    = \prod_{k=1}^{D}
      \left[
        1+(1-f_{t,k})
          -\operatorname{sg}(1-f_{t,k})
      \right],
\]
where $\operatorname{sg}$ denotes stop-gradient.
Numerically, $\omega_t=1$, so each token writes to exactly
one profile during both training and inference. During
backpropagation, the write weight supplies a surrogate gradient
to the hash parameters while the discrete address is held fixed.
This is a one-sided, biased straight-through estimator, rather
than differentiation through the discrete bin index.

We also encourage balanced coordinate occupancy. Let
$\bar p_{k,b}$ be the average soft interpolation mass assigned
to bin $b$ of coordinate $k$ over distant write positions.
The auxiliary loss is
\[
    \mathcal{L}_{\mathrm{bal}}
    = \frac{1}{D}\sum_{k=1}^{D}
      \sum_{b=0}^{q-1}
      \bar p_{k,b}\log\bigl(q\bar p_{k,b}\bigr),
    \qquad
    \mathcal{L}
    = \mathcal{L}_{\mathrm{task}}
      +\beta_{\mathrm{bal}}\mathcal{L}_{\mathrm{bal}}.
\]
This penalizes concentration in individual coordinate bins;
it does not require uniform occupancy of all joint profiles.

\textbf{Query routing and causality.}
After processing the distant block, profile memories are
aggregated into type summaries,
\[
    U_h=\sum_x C_{hx}F_x.
\]
For the single-hyperplane construction, a recent query at
$t=n+i$ hashes its causal representation to a point $u_t$
and chooses a nonzero normalized direction $a_t$ using that
same representation. Its type is
$H_{a_t,a_t^\top u_t}$, with arithmetic over $\mathbb{F}_q$. Consequently, its
long-range support is
\[
    G_{ij}
    = \mathbf{1}
      \left\{
        a_{n+i}^{\top}\operatorname{prof}(j)
        =a_{n+i}^{\top}u_{n+i}
      \right\},
    \qquad j\le n.
\]
This guarantees visibility for matching hash points.
Repeated token identities alone need not produce matching
points when the hash inputs include contextual information.

The four-read variant instead uses an independent learned
scorer $\ell_{i,h}$ computed from $h_{n+i}$. Let
$\mathcal{S}_i=\operatorname{Top4}_h(\ell_{i,h})$. Its read
weights are
\[
    R_{ih}
    =
    \begin{cases}
      \displaystyle
      \frac{\exp(\ell_{i,h})}
           {\sum_{g\in\mathcal{S}_i}\exp(\ell_{i,g})},
      & h\in\mathcal{S}_i,\\[6pt]
      0, & \text{otherwise}.
    \end{cases}
\]
The selected scores receive ordinary gradients through this
softmax; we do not differentiate through the top-four indices.
Unlike the single-hyperplane read, this selector need not
choose a hyperplane through the query's own hash point.

Both variants are causal: every summary contains only
positions $j\le n<n+i$, and the query's routing decision
depends only on its causal representation. Combined with
the causal local branch, the output at position $t$ depends
only on tokens at positions at most $t$.

\paragraph{Memory updates.}
The memory updates are adapted from the recurrent dynamics
of the corresponding backbone, with new payloads routed
to learned profile addresses.

For Mamba-2 extensions, we use an additive memory update: $
F_x^{(j)}
=a_j^G F_x^{(j-1)}
+\mathbf{1}\{x_j=x\}\omega_j k_jv_j^\top$,
where $x_j$ is the learned write profile and $\omega_j$
is an independent sigmoid write gate. We reuse the backbone's input-dependent
retention factor $a_j^G=\exp(-\exp(\theta_A)\Delta_j)$,
where $\theta_A$ is a learned parameter and $\Delta_j>0$
is the input-dependent time step.

For GDN extensions, we use boundary-transported additive
memory, incorporating the backbone's delta-rule transitions. Let $k_t,q_t$ be the normalized
backbone keys and queries, $v_t$ the values, $\beta_t$
the write gate, and $a_t$ the scalar decay. Define $
A_t=a_t(I-\beta_t k_tk_t^\top)$.
For distant token $j\le n$ with learned profile $x_j$,
the boundary-transported key is
\[
\bar k_j=A_n\cdots A_{j+1}\beta_j k_j,
\qquad
F_x=\sum_{j\le n:\,x_j=x}\bar k_jv_j^\top,
\]
where an empty product is the identity.
Equivalently, starting from zero, $
F_x^{(j)}
=A_jF_x^{(j-1)}
+\mathbf{1}\{x_j=x\}\beta_j k_jv_j^\top$.
Thus every profile undergoes the shared transition
$A_j$, while only the selected profile receives the
new payload. The implementation computes transported
keys and pools them additively, rather than explicitly
updating every profile.

After forming $U_b=\sum_x C_{bx}F_x$, a recent token
$t>n$ uses
\[
\bar q_t=r^{-1/2}(A_t\cdots A_{n+1})^\top q_t,
\qquad
z_t=z_t^{\mathrm{local}}
+\lambda_t\bar q_t^\top\sum_b R_{tb}U_b.
\]
Here $R_{tb}$ contains the four selected softmax read
weights and $\lambda_t$ is a learned sigmoid gate. 

\subsection{Experiments}
\label{app:experiments}

\textbf{Exact-support match.} Both subset routing and Multi-key subset recall tasks are scored by whether the model retrieves exactly the requested items. A query counts as correct only if every requested item is present in the output and every unrequested item is absent. In subset routing the output has one channel per marked position, and a channel counts as present when its magnitude exceeds $1\%$ of the largest payload in that example. In multi-key subset recall the output has one raw logit $z_i$ per key--payload pair $i\in\{1,\dots,8\}$, with no sigmoid or softmax applied before thresholding. A pair counts as present when $z_i>0.5$, i.e.\ when its sigmoid probability exceeds $\sigma(0.5)\approx0.62$, and the query is correct when $\{i: z_i>0.5\}$ equals the requested set. Training uses the matching per-pair binary cross-entropy, $-\frac18\sum_i\big[y_i\log\sigma(z_i)+(1-y_i)\log(1-\sigma(z_i))\big]$ averaged over answer rows, with a $k$-hot target $y$. Each evaluation draws $2{,}048$ fresh queries (8 batches of 32 sequences, 8 answer rows each).

\textbf{Subset routing.}
Payloads and requested subsets are resampled every batch, giving roughly $10^5$ examples per run against at most $2^6$ distinct requests. Reported values are the mean of three seeds. 

\begin{table}[H]
\small
\setlength{\tabcolsep}{4pt}
  \centering
  \begin{tabular}{lrrrrrr}
  \hline
  Model & $k=1$ & $k=2$ & $k=3$ & $k=4$ & $k=5$ & $k=6$ \\
  \hline
  Softmax & 100.00 (0.00) & 52.21 (2.09) & 28.78 (0.98) & 17.97 (3.47) &
  7.81 (1.28) & 5.27 (0.90) \\
  Mamba-2 & 100.00 (0.00) & 63.09 (9.95) & 32.23 (6.60) & 19.21 (4.80) &
  9.90 (1.64) & 5.60 (0.81) \\
  DeltaNet & 99.93 (0.11) & 76.69 (1.85) & 46.55 (2.35) & 30.21 (4.35) &
  15.76 (0.45) & 9.83 (1.18) \\
  Gated DeltaNet & 100.00 (0.00) & 66.80 (13.44) & 39.78 (5.18) & 24.22
  (6.80) & 16.93 (0.30) & 8.27 (2.14) \\
  Log-Linear & 99.93 (0.11) & 76.69 (1.71) & 46.55 (4.30) & 28.71 (5.40) &
  16.41 (1.55) & 9.77 (0.59) \\
  \rowcolor{gray!12}
  SMat ($d=1$) & 100.00 (0.00) & 77.21 (1.75) & 48.31 (4.56) & 29.88 (3.65)
  & 14.65 (1.35) & 9.96 (1.03) \\
  \rowcolor{gray!12}
  SMat ($d=2$) & 99.93 (0.11) & 99.48 (0.23) & 82.49 (1.77) & 52.93 (1.17) &
  33.07 (1.44) & 17.12 (3.00) \\
  \rowcolor{gray!12}
  SMat ($d=3$) & 99.93 (0.11) & 99.48 (0.23) & 98.18 (0.56) & 89.45 (3.07) &
  70.18 (2.74) & 43.10 (3.85) \\
  \rowcolor{gray!12}
  SMat ($d=4$) & 99.93 (0.11) & 99.54 (0.11) & 98.18 (0.56) & 97.98 (0.98) &
  93.68 (1.08) & 78.71 (2.25) \\
  \hline
  \end{tabular}
  \caption{Exact routing-pattern match (\%) on subset routing, by the
  number $k$ of marked positions. $T=1024$, one layer, 1500 steps, batch 64,
  learning rate 0.003; payloads are resampled each batch and evaluation
  uses fresh ones. Mean (std) over 3 seeds. SMat performs better at higher $k$ as $d$ increases.\looseness=-1}
  \label{tab:routing}
  \end{table}

The causal mask has VC dimension one because its row supports form a nested family, and softmax attention on the mask can assign substantially different weights to keys within an allowed prefix. The baseline results in Table \ref{tab:routing} therefore describe performance under the evaluated architecture, optimization, and scoring protocol.

\paragraph{Verification of end-to-end cost.} Figure~\ref{fig:prefill} times the prefill of a single SMat layer. Here we measure the full cost of the models that produce these subset routing results. Namely, we analyze training steps (forward pass, backward pass and optimizer update), prefill, and per-token decoding, together with peak memory and decoding cache. All measurements use one NVIDIA H200 GPU with PyTorch~2.11 and Triton~3.7.1, random weights and random inputs (step time depends on shapes, not on trained values). We report the median of at least ten timed steps after warm-up. 

We note some critical testing implementation details. The SMat prefill is reported with custom Triton kernels, but SMat training was done through its Pytorch path (we did not develop the kernels for backwards pass). Softmax attention uses PyTorch SDPA, which in fp32 runs the memory-efficient kernel. The recurrent baselines appear twice. The upper block runs them on optimized kernels: Mamba-2 on the SSD kernel of \texttt{mamba\_ssm}, and DeltaNet and Gated DeltaNet on the chunked kernels of \texttt{flash-linear-attention}, which agree with the reference forms to within $0.4\%$ relative error. The DeltaNet kernel does not accept fp32 inputs, so we ran it on bf16. Log-Linear uses the upstream kernels. The lower block runs the reference PyTorch implementations, as trained for Table~\ref{tab:routing}. SMat decodes with the cached decoder of Section~\ref{subsec:decoding}; softmax decodes from a preallocated KV cache, and the recurrences decode one step at a time. Before timing, every decoder is checked against the full forward pass. The largest relative error in fp32 is $5.3\times10^{-4}$, and the SMat decoder reproduces the prefill rows to $10^{-16}$ in fp64.

\begin{table}[h!]
\centering
\small
\setlength{\tabcolsep}{4pt}
\begin{tabular}{lrrrrrr}
\toprule
& \multicolumn{2}{c}{Train (ms/step)} & \multicolumn{2}{c}{Prefill (ms)} & Decode & Cache \\
\cmidrule(lr){2-3}\cmidrule(lr){4-5}
Model & 16K & 64K & 64K & 256K & (ms/token) & (MB) \\
\midrule
SMat ($d=1$) & 12.9 & 48.1 & 11.5 & 45.6 & 0.25 & 0.5 \\
SMat ($d=2$) & 14.1 & 51.6 & 11.3 & 44.5 & 0.25 & 99 \\
SMat ($d=3$) & 25.1 & 107.0 & 18.3 & 63.8 & 0.25 & 774 \\
SMat ($d=4$) & 192.8 & 1717.9 & 168.6 & 725.4 & 0.26 & 3439 \\
Gated SMat ($d=2$) & 92.7 & 988.9 & 43.1 & 211.3 & 0.34 & 99 \\
Gated SMat ($d=3$) & 104.0 & 1044.3 & 70.9 & 316.9 & 0.34 & 774 \\
\midrule
Softmax (SDPA) & 204.4 & 3147.9 & 836.5 & 13410.2 & 12.90 & 4295 \\
Mamba-2 (SSD kernel) & 8.6 & 29.1 & 7.2 & {--}$^{a}$ & 0.22 & 0.3 \\
DeltaNet (FLA, bf16) & 8.9 & 31.7 & 11.7 & 46.5 & 0.24 & 0.3 \\
Gated DeltaNet (FLA) & 11.3 & 41.9 & 13.8 & 54.7 & 0.31 & 0.3 \\
Log-Linear (upstream) & 21.9 & 97.8 & 22.8 & OOM$^{b}$ & {--} & {--} \\
\midrule
\multicolumn{7}{l}{\emph{Reference implementations, as trained for Table~\ref{tab:routing}}} \\
Mamba-2 & 66.0 & 683.0 & 36.3 & 175.2 & 0.22 & 0.3 \\
DeltaNet & 250.7 & 3230.4 & 85.4 & 396.4 & 0.24 & 0.3 \\
Gated DeltaNet & 279.0 & 3377.7 & 121.7 & 534.4 & 0.31 & 0.3 \\
Log-Linear & OOM$^{c}$ & {--} & 340.7$^{d}$ & {--} & {--} & {--} \\
\bottomrule
\end{tabular}
\caption{End-to-end cost of the subset-routing models: batch 64, fp32. SMat prefill uses the Triton kernels. Decode latency and cache size are at context length 256K. Optimized and reference recurrences share the same one-step decoder. $^{a}$
The SSD kernel exceeds a CUDA launch limit at batch 64 and length 256K. $^{b}$The upstream port builds a dense $T\times T$ level table (512\,GiB at 256K). $^{c}$The reference Log-Linear builds the same dense table and runs out of memory in tr
aining at 16K. $^{d}$Value at 16K.}
\label{tab:e2e-routing}
\end{table}

Table~\ref{tab:e2e-routing} sheds light on SMat's improvements over other attention variants. Up to $d=3$, the hard route SMat is linear in time from end to end. From 16K to 64K its training step grows by $3.7$--$4.3\times$,while softmax attention's grows by $15.4\times$. Prefill grows by $3.5$--$4.0\times$ from 64K to 256K, against $16\times$ for softmax. Per-token decoding is flat at $0.25$\,ms from 1K to 256K, and $0.22$\,ms at batch 1. Against softmax attention this means $29$--$65\times$ faster training steps at 64K, $210$--$300\times$ faster prefill at 256K, $51\times$ faster decoding at 256K, and a cache $5.5\times$ ($d=3$) to $43\times$ ($d=2$) smaller than the KV cache. Against optimized linear-time kernels, SMat's Triton prefill is comparable: at 256K it matches DeltaNet and is faster than Gated DeltaNet. SMat training runs on unfused PyTorch operations and is $1.2$--$1.8\times$ slower than those kernels at 64K for $d\le 2$, and $2.6$--$3.7\times$ slower for $d=3$. It remains faster than the upstream Log-Linear kernels. SMat's decoding cache holds $B+1$ summaries, so it exceeds the constant-size recurrent state. At short lengths it can also exceed the KV cache: at 1K, $d=3$ uses 36\,MB against 17\,MB for softmax, and the order reverses by 4K. At $d=4$ the $O(T^{2-3/d})$ long-range term dominates at these lengths. SMat is then slower than the linear-time kernels, but still $1.8\times$ faster than softmax in training at 64K and $18\times$ faster in prefill at 256K. Gating adds only linear work (Lemma~\ref{thm:gated}). Our gated causal scan, however, is an unfused PyTorch loop over chunks, so gated training is about $20\times$ slower than ungated at 64K, while gated prefill and decoding stay within $2.5$--$5\times$ and $1.4\times$ of ungated.

\textbf{Multi-key subset recall.}
Keys and payloads are placed at random positions in the distant block, so the task cannot be solved positionally. The hyperplane read is used throughout, and the reported metric is exact-support match: a query counts as correct only when the retrieved support equals the requested key set. \emph{LSH bucketing} \citep{kitaev2020reformerefficienttransformer} hashes keys into the same cells, but a query reads only its own cell rather than a hyperplane of cells. \emph{Memory-matched linear attention} tests whether SMat's advantage is simply the larger state it keeps, since associative recall in efficient models is known to be limited by state size \citep{arora2023zoologymeasuringimprovingrecall}.

\begin{table}[h]
\centering\scriptsize\setlength{\tabcolsep}{7pt}
\begin{tabular}{lrrrrrrrrr}
\toprule
 & \multicolumn{3}{c}{$k=1$} & \multicolumn{3}{c}{$k=2$} & \multicolumn{3}{c}{$k=3$} \\
\cmidrule(lr){2-4} \cmidrule(lr){5-7} \cmidrule(lr){8-10}
Model & s0 & s1 & s2 & s0 & s1 & s2 & s0 & s1 & s2 \\
\midrule
Softmax & 100.0 & 7.9 & 25.0 & 99.9 & 0.0 & 100.0 & 99.9 & 0.0 & 99.5 \\
Mamba-2 & 0.0 & 0.0 & 0.0 & 0.0 & 0.0 & 0.0 & 0.0 & 0.0 & 0.0 \\
DeltaNet & 0.0 & 0.0 & 0.0 & 0.0 & 0.0 & 0.0 & 0.0 & 0.0 & 0.0 \\
Gated DeltaNet & 0.0 & 0.0 & 0.0 & 0.0 & 0.0 & 0.0 & 0.0 & 0.0 & 0.0 \\
Log-Linear & 0.0 & 0.0 & 0.0 & 0.0 & 0.0 & 0.0 & 0.0 & 0.0 & 0.0 \\
Linear attention & 0.0 & 0.0 & 0.0 & 0.0 & 0.0 & 0.0 & 0.0 & 0.0 & 0.0 \\
Linear attention (memory matched) & 0.0 & 0.0 & 0.0 & 0.0 & 0.0 & 0.0 & 0.0 & 0.0 & 0.0 \\
LSH bucketing (121 buckets) & 100.0 & 100.0 & 100.0 & 0.7 & 0.4 & 0.3 & 0.0 & 0.0 & 0.0 \\
LSH bucketing (125 buckets) & 100.0 & 100.0 & 100.0 & 1.8 & 1.2 & 2.1 & 0.0 & 0.0 & 0.0 \\
SMat ($d=2$) & 100.0 & 100.0 & 100.0 & 11.7 & 14.0 & 16.0 & 0.9 & 1.6 & 1.4 \\
SMat ($d=3$) & 100.0 & 100.0 & 100.0 & 99.9 & 99.9 & 99.7 & 35.5 & 33.1 & 34.0 \\
SMat ($d=4$) & 100.0 & 100.0 & 100.0 & 100.0 & 99.9 & 99.9 & 99.0 & 99.0 & 99.0 \\
\bottomrule
\end{tabular}
\caption{Per-seed exact-support accuracy (\%) behind Table~\ref{tab:mkar}, after 12K steps on 2{,}048 fresh queries per run. SMat rows use rule-chosen hyperplane directions (see text).}
\label{tab:mkar-seeds}
\end{table}

To further examine these results, a network that ignores the query does best by predicting the base rate $k/8$ for every pair. That constant predictor has loss $H(k/8)=0.377$, $0.562$ and $0.662$ nats for $k=1,2,3$, and its logit $\log\frac{k}{8-k}<0.5$, so it marks every pair absent. More generally, a constant output is all-absent or all-present under any threshold and therefore never equals a $k$-hot target, so it scores exactly $0$ whether the threshold is placed on logits or on probabilities. All $54$ linear-baseline runs end within $0.0012$ nats of $H(k/8)$ (Figure~\ref{fig:mkar-sanity}). Their zeros therefore come from models that never began to use the query. For reference, guessing $k$ pairs uniformly at random scores $1/\binom8k=12.5\%$, $3.6\%$ and $1.8\%$.

Table~\ref{tab:mkar-seeds} lists every run behind Table~\ref{tab:mkar}, and Figure~\ref{fig:mkar-sanity} shows the training loss and exact-support accuracy of each seed. Softmax outcomes are bimodal. Each run starts on the constant-prior plateau with some runs leaving after $6$K and $10$K steps and reaching $\geq99\%$ by the next evaluation. Seed~1 never leaves the plateau at $k=2,3$; at $k=1$, seeds~1 and~2 had begun to leave it when training stopped. The large standard deviations of the softmax row therefore record whether a seed escaped the plateau within the budget. The other warmup and learning-rate settings we tried for softmax did no better (Table~\ref{tab:mkar-softmax-sweep}).

It is also important to note the difference between the rule-chosen and learned hyperplane schemas for the SMat matrix in this experiment. In the SMat rows of Table~\ref{tab:mkar}, the hyperplane direction at each answer row is set, during training and evaluation, by a fixed rule that picks a direction on which the hashed cells of all $k$ requested keys agree whenever one exists. The baselines have no equivalent mechanism. This isolates what the mask can express. When a linear head predicts the direction from the query instead (seed~0 only), SMat scores $99.9$, $60.1$ and $44.6\%$ at $d=3$ and $100.0$, $74.2$ and $59.2\%$ at $d=4$ for $k=1,2,3$. At $d=2$ the hyperplanes are single points, so there is only one hyperplane direction and the two schemas coincide.

\begin{table}[h]
\centering\small
\begin{tabular}{lccc}
\toprule
Softmax schedule, LR & $k=1$ & $k=2$ & $k=3$ \\
\midrule
no warmup, $3\times10^{-4}$ (reported) & 100.0 / 7.9 / 25.0 & 99.9 / 0.0 / 100.0 & 99.9 / 0.0 / 99.5 \\
1K-step warmup, $10^{-4}$ & 100.0 / 0.0 / 13.4 & 0.0 / 0.0 / 0.0 & 0.0 / 0.0 / 0.0 \\
1K-step warmup, $3\times10^{-4}$ & 100.0 / 1.5 / 13.4 & 0.0 / 0.0 / 2.8 & 99.8 / 0.0 / 0.0 \\
1K-step warmup, $10^{-3}$ & 0.0 / 0.0 / 100.0 & 0.0 / 0.0 / 0.0 & 0.0 / 0.0 / 0.0 \\
\bottomrule
\end{tabular}
\caption{Softmax on multi-key subset recall at $T=1024$: exact-support accuracy (\%) for seeds 0 / 1 / 2 under each schedule tried. A run that leaves the constant-prior plateau reaches $\geq 99\%$; the runs between 1\% and 25\% had begun leaving it when training stopped. The reported row is the best schedule.}
\label{tab:mkar-softmax-sweep}
\end{table}

\begin{table}[h]
\centering\small
\begin{tabular}{lrrr}
\toprule
Model & $T=128$ & $T=256$ & $T=1024$ \\
\midrule
Softmax & 97.5 & 99.5 & 100.0 \\
Mamba-2 & 99.2 & 0.0 & 0.0 \\
DeltaNet & 100.0 & 100.0 & 0.0 \\
Gated DeltaNet & 100.0 & 100.0 & 0.0 \\
Log-Linear & 99.9 & 93.6 & 0.0 \\
Linear attention & 94.8 & 35.6 & 0.0 \\
\bottomrule
\end{tabular}
\caption{Positive control: exact-support accuracy (\%) for $k=1$, seed 0, with the harness, metric, model and 12K-step budget of Table~\ref{tab:mkar} and only the context length changed (8 pairs throughout).}
\label{tab:mkar-posctrl}
\end{table}

\begin{figure}[h!]
\centering
\includegraphics[width=0.9\linewidth]{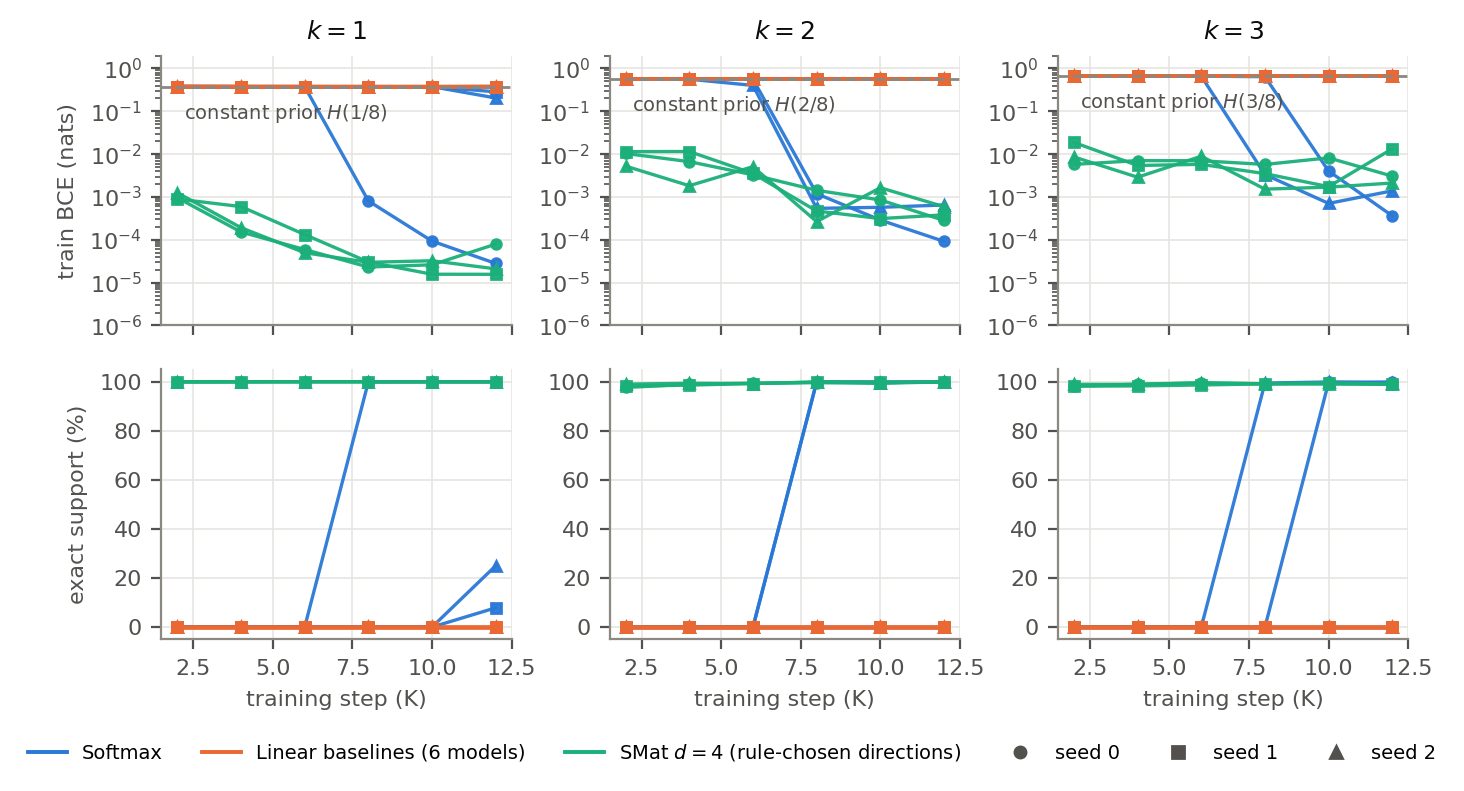}
\caption{Multi-key subset recall at $T=1024$, every seed. Top: training binary cross-entropy of the minibatch at each evaluation step (log scale); the dashed line is the constant-prior loss $H(k/8)$, and all $18$ linear-baseline runs per panel lie on it. Bottom: exact-support accuracy on $2{,}048$ fresh queries, evaluated every $2$K steps.}
\label{fig:mkar-sanity}
\end{figure}

\begin{table}[h]
\centering
\caption{Mamba-2 SMat results across three seeds. Mean (std).}
\label{tab:mamba-smat-3seeds-pop}
\begin{tabular}{lccc}
\toprule
Model & \(k=1\) & \(k=2\) & \(k=3\) \\
\midrule
Mamba-2     & 0.00 (0.00) & 0.00 (0.00)  & 0.00 (0.00) \\
SMat (\(d=2\))    & 99.48 (0.72) & 2.51 (3.54) & 0.00 (0.00) \\
SMat (\(d=3\))    & 96.43 (4.65) & 84.29 (19.86) & 0.00 (0.00) \\
SMat (\(d=4\))    & 100.00 (0.00) & 99.98 (0.02) & 99.90 (0.14) \\
\bottomrule
\end{tabular}
\end{table}

Table~\ref{tab:mamba-smat-3seeds-pop} shows the results of a learned Mamba-2--SMat on multi-key subset recall. The baseline Mamba-2 achieves zero exact-support accuracy under this training setup, whereas SMat with $d=2$ succeeds primarily at $k=1$, $d=3$ substantially improves performance at $k=2$, and $d=4$ achieves near-perfect accuracy across all three settings. This qualitatively matches the original MKAR pattern, with increasing geometric dimension enabling accurate retrieval of larger requested subsets.

\textbf{MQAR.}
Run through the Zoology harness \citep{arora2023zoologymeasuringimprovingrecall} with its data pipeline unchanged: training mixtures of $4$, $8$, $16$, $32$, and $64$ key--value pairs at lengths $64$--$256$ ($100$K examples for the $4$-pair mixture, $20$K for each of the others), and $1000$ held-out examples per mixture. Evaluation batch size is $32$. The GDN backbone uses one head at width $16$ and two heads at widths $32$ and $64$, with value expansion $1$, following
\citet{guo2025loglinearattention}; $32$ epochs is $707$ optimizer steps per epoch. Log-Linear Attention uses the authors' released implementation with the same widths, head configuration, and value expansion as the corresponding backbone, and the same 32-epoch budget.

\textbf{Joint context-key recall (JCKR).}
Models retrieve values from shuffled context-key-value records, with keys
shared across contexts and values sampled uniformly from 16 symbols.
Every context-key pair is queried in random order, with answers masked
and cross-entropy applied only at query positions.
We use $(C,K)\in\{(1,4),(2,8),(8,16),(16,16),(32,16)\}$, yielding
sequence lengths $64$--$3076$, with 36000/2000/4000 train/validation/test
examples per configuration. Two-layer models of width 64 train for
32 epochs using AdamW, learning rate $3\times10^{-3}$, cosine decay,
and batch size 256 (Table~\ref{tab:training}).
We report final-epoch validation accuracy averaged equally across
configurations, with mean and standard deviation over three training seeds.

\begin{table}[h]
\centering
\begin{tabular}{lc}
\hline
Model & Accuracy (\%) \\
\hline
Plain GDN & $53.80\ (2.86)$ \\
GDN + SMat ($d=3$) & $\mathbf{58.92\ (4.22)}$ \\
MoM-derived, profile-count matched & $43.73\ (1.48)$ \\
MoM-derived, matrix-storage matched & $35.10\ (3.82)$ \\
\hline
\end{tabular}
\caption{Shared-key joint recall: final-test accuracy averaged
equally over the five evaluated binding loads. Mean and std reported across three seeds.}
\label{tab:mom_macro}
\end{table}

Furthermore, we compare SMAT with two variants derived from
Mixture-of-Memories (MoM)~\cite{du2026mom}, both using Gated
DeltaNet updates, top-4 routing, and no shared memory. At each sequence length,
the profile-count variant has one independent memory matrix per
SMAT profile, while the matrix-storage variant matches the
combined FP32 storage of SMAT's profile and derived summary
matrices. All memory matrices are $16 \times 16$.
We retain the same data splits, model width, layer and head counts,
and 32-epoch training recipe.

\begin{table}[H]
\centering

\begin{subtable}[t]{0.48\linewidth}
\centering
\small
\setlength{\tabcolsep}{5pt}
\begin{tabular}{
    l
    S[table-format=1.3]
    S[table-format=1.3]
}
\toprule
\textbf{Model} & {\textbf{16K}} & {\textbf{32K}} \\
\midrule
Mamba-2      & 3.752 & 3.803 \\
\rowcolor{gray!12}
SMat $d=2$   & 3.743 & 3.788 \\
\rowcolor{gray!12}
SMat $d=3$   & 3.744 & 3.787 \\
\rowcolor{gray!12}
SMat $d=4$   & 3.741 & 3.788 \\
Softmax      & 3.797 & 3.910 \\
\bottomrule
\end{tabular}
\phantomsubcaption
\label{tab:pg19-300m}
\par\smallskip
(a) 300M training tokens
\end{subtable}
\hfill
\begin{subtable}[t]{0.48\linewidth}
\centering
\small
\setlength{\tabcolsep}{4pt}
\begin{tabular}{
    l
    S[table-format=2.3]
    S[table-format=2.3]
}
\toprule
\textbf{Variant} & {\textbf{GDN}} & {\textbf{Mamba-2}} \\
\midrule
Baseline      & 23.165 & 24.114 \\
\rowcolor{gray!12}
SMat ($d=2$) & 23.097 & 24.117 \\
\rowcolor{gray!12}
SMat ($d=3$) & 23.214 & 24.182 \\
\bottomrule
\end{tabular}
\phantomsubcaption
\label{tab:pg19-750m}
\par\smallskip
(b) 750M training tokens
\end{subtable}

\caption{
PG-19 language modeling results. Left: negative log-likelihood (NLL) per
token after 300M training tokens with eight layers and width 384.
Right: validation perplexity after 750M training tokens at 16K context
(seed 123). Lower is better.
}
\label{tab:pg19-results}
\end{table}
\textbf{PG-19.}
Books are tokenized with the GPT-2 BPE vocabulary and packed into fixed-length windows; all arms see the same data order. The transformer arm is attention plus a $4\times$ MLP with eight heads, sized so that its non-embedding parameter count is comparable to SMat's. An auxiliary load-balancing term with coefficient $0.01$ is applied to the routing hash and annealed over the first $1000$ steps. Evaluation is per-token negative log-likelihood on the PG-19 test split
($100$ books, $5\times10^6$ tokens) at the training context length. Peak memory is roughly $15$\,GB per arm at $16$K and $30$\,GB at $32$K. \footnote{Due to academic compute constraints, each model configuration in Table \ref{tab:pg19-results} was trained with a single seed.}\looseness=-1

\end{document}

%% file: math_commands.tex
\usepackage{amsmath,amsfonts,bm}
\usepackage{amsthm}
\newtheorem{definition}{Definition}
\newtheorem{lemma}{Lemma}

\def\eqref#1{equation~\ref{#1}}

\def\1{\mathbf{1}}

\def\mA{{\mathbf{A}}}

\def\mC{{\mathbf{C}}}
\def\mD{{\mathbf{D}}}
\def\mE{{\mathbf{E}}}
\def\mF{{\mathbf{F}}}
\def\mG{{\mathbf{G}}}
\def\mH{{\mathbf{H}}}

\def\mK{{\mathbf{K}}}
\def\mL{{\mathbf{L}}}
\def\mM{{\mathbf{M}}}
\def\mN{{\mathbf{N}}}

\def\mQ{{\mathbf{Q}}}
\def\mR{{\mathbf{R}}}

\def\mV{{\mathbf{V}}}

\def\mZ{{\mathbf{Z}}}

\DeclareMathAlphabet{\mathsfit}{\encodingdefault}{\sfdefault}{m}{sl}
\SetMathAlphabet{\mathsfit}{bold}{\encodingdefault}{\sfdefault}{bx}{n}

\newcommand{\R}{\mathbb{R}}

\usepackage{booktabs}
\usepackage{algorithm}
\usepackage[noend]{algorithmic}
\usepackage{hyperref}
\usepackage{cleveref}
\usepackage{amssymb}
\usepackage{bbm}
\usepackage{amsthm}